\newtheorem{theorem}{Theorem}
\newtheorem{lemma}{Lemma}
\theoremstyle{definition}
\newtheorem{problem}{Problem}
\newcommand{\set}[1]{\{ #1 \}}
\newcommand{\ind}[1]{\mathbb{I}\left[ #1 \right]}
\newcommand{\supp}[1]{\operatorname{supp}( #1 )}
\newcommand{\lsupp}[1]{\operatorname{lsupp}( #1 )}
\newcommand{\sgn}[1]{\operatorname{sgn}\left( #1 \right)}
\DeclareMathOperator*{\argmin}{\operatorname{arg\, min}}
\renewcommand{\epsilon}{\varepsilon}
\crefname{algorithm}{Algorithm}{Algorithms}
\crefname{table}{Table}{Tables}
\crefname{figure}{Figure}{Figures}
\crefname{section}{Section}{Sections}
\crefname{problem}{Problem}{Problems}
\crefname{theorem}{Theorem}{Theorems}
\crefname{proposition}{Proposition}{Propositions}
\crefname{lemma}{Lemma}{Lemmas}
\crefname{remark}{Remark}{Remarks}
\title{Learning Locally Interpretable Rule Ensemble}
\author{%
  Kentaro Kanamori \\
  Artificial Intelligence Laboratory, Fujitsu Ltd. \\
  \texttt{k.kanamori@fujitsu.com} \\
}
\begin{document}

\maketitle              % typeset the header of the contribution
\begin{abstract}
This paper proposes a new framework for learning a rule ensemble model that is both accurate and interpretable\footnote{To appear in the 2023 European Conference on Machine Learning and Principles and Practice of Knowledge Discovery in Databases (ECMLPKDD 2023). }. A rule ensemble is an interpretable model based on the linear combination of weighted rules. In practice, we often face the trade-off between the accuracy and interpretability of rule ensembles. That is, a rule ensemble needs to include a sufficiently large number of weighted rules to maintain its accuracy, which harms its interpretability for human users. To avoid this trade-off and learn an interpretable rule ensemble without degrading accuracy, we introduce a new concept of interpretability, named local interpretability, which is evaluated by the total number of rules necessary to express individual predictions made by the model, rather than to express the model itself. Then, we propose a regularizer that promotes local interpretability and develop an efficient algorithm for learning a rule ensemble with the proposed regularizer by coordinate descent with local search. Experimental results demonstrated that our method learns rule ensembles that can explain individual predictions with fewer rules than the existing methods, including RuleFit, while maintaining comparable accuracy. 
\keywords{Interpretability \and Explainability \and Rule ensemble.}
\end{abstract}

\section{Introduction}
In the applications of machine learning models to high-stake decision-making such as loan approvals, \emph{interpretability} has been recognized as an important element~\citep{Rudin:NMI2019,Rudin:SS2022,Miller:AI2019}. 
One of the well-known interpretable models is a \emph{rule model}, including decision trees~\citep{Hu:NIPS2019}, rule lists~\citep{Angelino:KDD2017}, and rule sets~\citep{Lakkaraju:KDD2016}. 
Because rule models are expressed using logical rules that are easy to understand, they can explain how they make predictions in an interpretable manner by themselves. 
Such explanation helps human users ensure transparency for their critical decision-making, as well as discover new knowledge from data~\citep{Freitas:EN2014}. 
% In this study, we focus on a \emph{rule ensemble}~\citep{Friedman:AAS2008,Wei:ICML2019}, which is a rule model consisting of a set of logical rules and their corresponding weight values. 
% For a given input, a rule ensemble makes a prediction based on the linear combination of the weighted rules that the input satisfies. 
In this study, we focus on a \emph{rule ensemble}~\citep{Friedman:AAS2008,Wei:ICML2019}, which is a rule model based on the linear combination of weighted rules. 
For a given input, a rule ensemble makes a prediction depending on the sum of the weights corresponding to the rules that the input satisfies. 

One of the main obstacles to learning rule ensembles is the trade-off between accuracy and interpretability. 
While the interpretability of rule models has several definitions depending on their forms and applications~\citep{Lipton:Queue2018,Doshi-Velez:arxiv2017}, a common criterion for evaluating the interpretability of a model is the total number of rules required to express the model~\citep{Lage:HCOMP2019,Freitas:EN2014}. 
Due to the cognitive limitations of human users, a model should consist of as few rules as possible, even if the model belongs to the class of inherently interpretable models~\citep{Rudin:NMI2019,Lakkaraju:KDD2016}. 
In practice, however, a rule ensemble requires a sufficiently large number of weighted rules to maintain generalization performance~\citep{Nalenz:AISTATS2022}. 
Therefore, we often need to compromise interpretability to maintain accuracy when learning rule ensembles. 

In general, a major approach to addressing the accuracy-interpretability trade-off is to make each individual prediction, rather than a model itself, interpretable. 
For example, if we need to validate an undesired prediction result (e.g., high risk of default) made by a model, it is often sufficient to explain a reason why the model outputs the prediction in an interpretable way, even if the model consists of too many rules to interpret~\citep{Caruana:KDD2015}. 
To extract an explanation for each individual prediction from a learned model, several model-agnostic methods, such as LIME and SHAP, have been proposed~\citep{Ribeiro:KDD2016,Lundberg:NIPS2017}. 
However, because most of these methods construct explanations by locally approximating a model, recent studies have pointed out the risk that their explanations are inconsistent with the actual behavior of the model~\citep{Rudin:NMI2019,Jacovi:ACL2020,Yoon:TMLR2022}. 
To avoid this risk and provide faithful explanations, we need to explain each individual prediction with the rules that the model actually uses to make the prediction, without approximation. 

In this paper, we propose \emph{locally interpretable rule ensemble (LIRE)}, a new framework for learning accurate and interpretable rule ensembles. 
While a number of weighted rules are required to maintain the accuracy of a rule ensemble model, not all of them are required to make each individual prediction by the model. 
More precisely, only the weighted rules that a given input satisfies are required to express its prediction, and the other rules are not by the definition of rule ensembles. 
This fact suggests a chance to learn a rule ensemble with a sufficient number of weighted rules to maintain accuracy but that can express individual predictions using a few weighted rules~\citep{Rudin:NMI2019}. 
Motivated by this fact, we aim to learn a rule ensemble that can explain individual predictions with as few weighted rules as possible, which we refer to as \emph{local interpretability}. 
To this end, we introduce a regularizer that promotes local interpretability, and propose an efficient algorithm for learning a rule ensemble with the proposed regularizer.

\begin{table}[p]
    \centering
    % \small
    \caption{
    Examples of an input vector $\bm{x}$ and rule ensemble classifiers on the Adult dataset. 
    The classifiers predict an input as ``Income $<$ \$50K" if the sum of the weights of the satisfied rules is greater than their intercept. 
    In (b) and (c), rules that the input $\bm{x}$ satisfies are highlighted in boldface. 
    }
    \begin{subtable}[h]{\textwidth}
        \centering
        \caption{Input vector $\bm{x}$ with the label ``Income $<$ \$50K"}
        \begin{tabular}{lc}
        \toprule
        \textbf{Feature} & \textbf{Value} \\
        \midrule
            {Age} & $39$ \\
            {fnlwgt} & $120985$ \\
            {Education-Num} & $9$ \\
            {Capital-Gain} & $0$ \\
            {Capital-Loss} & $0$ \\
            {House-per-week} & $40$ \\
            {Workclass} & Private \\
            {Education} & HS-grad \\
            {Marital-Status} & Divorced \\
            {Occupation} & Other-service \\
            {Relationship} & Own-child \\
            {Race} & White \\
            {Sex} & Male \\
            {Country} & United-States  \\
        \bottomrule
        \end{tabular}
        \label{tab:intro:example:input}
    \end{subtable}
    \hfill
    % \subfloat[RuleFit~\citep{Friedman:AAS2008} (test AUC: 0.859)]{
    % \subfloat[RuleFit~\citep{Friedman:AAS2008} (test F1 score: 0.894)]{
    \begin{subtable}[h]{\textwidth}
        \centering
        \caption{RuleFit~\citep{Friedman:AAS2008} (intercept: $0.0$, test accuracy: $83.0\%$)}
        \begin{tabular}{lc}
        \toprule
            \textbf{Rule} & \textbf{Weight} \\
        \midrule
            \textbf{Education-Num $\leq$ 12 \& Capital-Gain $\leq$ 5119} & $\bm{1.006}$ \\
            \textbf{Marital-Status $\not=$ Married-civ-spouse \& Education $\not=$ Prof-school}	& $\bm{0.644}$ \\
            \textbf{Capital-Loss $\leq$ 1820 \& Marital-Status $\not=$ Married-civ-spouse} & $\bm{0.411}$ \\
            \textbf{Marital-Status $\not=$ Married-civ-spouse \& Hours-per-week $\leq$ 44} & $\bm{0.312}$ \\
            \textbf{Age $>$ 31 \& Sex $=$ Male} & $\bm{-0.191}$ \\
            \textbf{Marital-Status $\not=$ Married-civ-spouse \& Education $\not=$ Masters} & $\bm{0.050}$ \\
            Marital-Status $=$ Married-civ-spouse \& Education $\not=$ HS-grad & $-0.027$ \\
            Hours-per-week $>$ 43 \& Marital-Status $\not=$ Never-married & $-0.014$ \\
        \bottomrule
        \end{tabular}
        \label{tab:intro:example:rulefit}
    \end{subtable}
    \hfill
    % \subfloat[LIRE (ours, test AUC: 0.866)]{
    % \subfloat[LIRE (ours, test F1 score: 0.900)]{
    \begin{subtable}[h]{\textwidth}
        \centering
        \caption{LIRE (ours, intercept: $-1.637$, test accuracy: $84.2\%$)}
        \begin{tabular}{lc}
        \toprule
            \textbf{Rule} & \textbf{Weight} \\
        \midrule
            Capital-Gain $>$ 5119 & $-1.536$ \\
            \textbf{Relationship $=$ Own-child \& Hours-per-week $\leq$ 49} & $\bm{1.255}$ \\
            Capital-Loss $>$ 1820 \& Capital-Loss $\leq$ 1978 & $-1.245$ \\
            Marital-Status $=$ Married-civ-spouse & $-1.192$ \\
            \textbf{Hours-per-week $\leq$ 43 \& Occupation $=$ Other-service} & $\bm{0.906}$ \\
    	Education-Num $>$ 12 & $-0.801$ \\
            Relationship $\not=$ Own-child \& Capital-Gain $>$ 5095 & $-0.661$ \\
         \bottomrule
        \end{tabular}
        \label{tab:intro:example:lire}
    \end{subtable}
    \label{tab:intro:example}
\end{table}

\subsubsection*{Our Contributions}
Our contributions are summarized as follows:
\begin{itemize}
    \item 
    We introduce a new concept for evaluating the interpretability of rule ensembles. 
    Our concept, named local interpretability, is evaluated by the total number of weighted rules that are necessary to express each individual prediction locally, rather than to express the entire model globally. 
    \item 
    We propose a regularizer that promotes the local interpretability of a rule ensemble, and formulate a task of learning a locally interpretable rule ensemble (LIRE) classifier. 
    Then, we propose an efficient algorithm for learning a LIRE classifier by coordinate descent with local search. 
    \item 
    We conducted experiments on real datasets to evaluate the efficacy of LIRE. 
    We confirmed that our method can learn rule ensembles that are more locally interpretable than the existing methods such as RuleFit~\citep{Friedman:AAS2008}, while maintaining accuracy and entire interpretability comparable to them. 
\end{itemize}

\cref{tab:intro:example} presents a demonstration of our framework on the Adult dataset~\citep{Dua:2019}. 
While \cref{tab:intro:example}a shows an example of an input vector $\bm{x}$, Tables~\ref{tab:intro:example}b and \ref{tab:intro:example}c present examples of rule ensemble classifiers leaned by RuleFit~\citep{Friedman:AAS2008} and our LIRE. 
In Tables~\ref{tab:intro:example}b and \ref{tab:intro:example}c, we denote the weighted rules that the input $\bm{x}$ satisfies in boldface, and the average total number of them was $\bm{3.8}$ for RuleFit and $\bm{1.1}$ for LIRE, respectively. 
\cref{tab:intro:example} demonstrates that our LIRE 
(i)~could make an accurate prediction for $\bm{x}$ with fewer weighted rules than RuleFit, and
(ii)~achieved the test accuracy comparable to RuleFit. 
These results suggest that our method can learn a locally interpretable rule ensemble without degrading accuracy. 
% This result suggests that our method learns a rule ensemble that can explain individual predictions with a few rules %without degrading prediction accuracy. 

\subsubsection*{Notation}
For a positive integer $n \in \mathbb{N}$, we write $[n] \coloneqq \set{1,\dots,n}$.
For a proposition $\psi$, $\ind{\psi}$ denotes the indicator of $\psi$; that is, $\ind{\psi}=1$ if $\psi$ is true, and $\ind{\psi}=0$ if $\psi$ is false. 
Throughout this paper, we consider a \emph{binary classification problem} as a prediction task. 
Note that our framework introduced later can also be applied to regression problems. 
We denote input and output domains $\mathcal{X} \subseteq \mathbb{R}^{D}$ and $\mathcal{Y} = \set{-1, +1}$, respectively. 
Let a tuple $(\bm{x},y)$ of an input vector $\bm{x} \in \mathcal{X}$ and output label $y \in \mathcal{Y}$ be an \emph{example}, and the set $S = \set{(\bm{x}_n, y_n)}_{n=1}^{N}$ be a \emph{sample} with $N$ examples. 
We call a function $h \colon \mathcal{X} \to \mathcal{Y}$ a \emph{classifier}. 
Let $l \colon \mathcal{Y} \times \mathbb{R} \to \mathbb{R}_{\geq 0}$ be a loss function, such as the logistic loss, hinge loss, or exponential loss~\citep{Mohri:2012:Foundations}. 
% We assume that a loss function $l$ is convex. 

\section{Rule Ensemble}
A \emph{rule ensemble} is a model consisting of a set of rules and their corresponding weights~\citep{Friedman:AAS2008}. 
Each rule is expressed as a form of a conjunction of features (e.g., ``Age $>$ 31 \& Sex $=$ Male" as shown in \cref{tab:intro:example}), and has a corresponding weight value. 
Given an input $\bm{x} \in \mathcal{X}$, a rule ensemble makes a prediction depending on the linear combination of the weighted rules that the input satisfies. 
For binary classification, a rule ensemble classifier $h_{\bm{\alpha}} \colon \mathcal{X} \to \mathcal{Y}$ is defined as
\begin{align*}
    h_{\bm{\alpha}}(\bm{x}) \coloneqq \sgn{\sum_{m=1}^{M} \alpha_m \cdot r_m(\bm{x})},
\end{align*}
where $r_m \colon \mathcal{X} \to \set{0,1}$ is a rule, $\alpha_m \in \mathbb{R}$ is a weight corresponding to $r_m$, and $M \in \mathbb{N}$ is the total number of rules. 
We denote the \emph{decision function} $f_{\bm{\alpha}} \colon \mathcal{X} \to \mathbb{R}$ of $h_{\bm{\alpha}}$ by $f_{\bm{\alpha}}(\bm{x}) \coloneqq \sum_{m=1}^{M} \alpha_m \cdot r_m(\bm{x})$, i.e., $h_{\bm{\alpha}}(\bm{x}) = \sgn{f_{\bm{\alpha}}(\bm{x})}$. 

To learn a rule ensemble $h_{\bm{\alpha}}$ from a given sample $S$, we first need to obtain a set of rules $R = \set{r_1, \dots, r_M}$ from $S$. 
However, it is computationally difficult to enumerate all the candidate rules on $\mathcal{X} \subseteq \mathbb{R}^D$ because their size grows exponentially with $D$~\citep{Wei:ICML2019,Nakagawa:KDD2016,Kato:PAMI2023}. 
To avoid enumerating all of them, we need to efficiently generate a subset of candidate rules that can improve the accuracy of $h_{\bm{\alpha}}$. 

Another challenge is to learn a sparse weight vector $\bm{\alpha} = (\alpha_1, \dots, \alpha_M) \in \mathbb{R}^M$. 
By definition, a rule $r_m$ with $\alpha_m = 0$ does not contribute to the predictions of a rule ensemble $h_{\bm{\alpha}}$. 
% Hence, reducing the rules $r_m$ with $\alpha_m \not= 0$ as much as possible is desirable to ensure interpretability. 
% To ensure interpretability, reducing the rules $r_m$ with $\alpha_m \not= 0$ is desirable. 
% However, a rule ensemble often requires a sufficiently large number of rules with non-zero weights to achieve good generalization~\citep{Nalenz:AISTATS2022}. 
While reducing the total number of the rules $r_m$ with non-zero weights is essential to ensure the interpretability of $h_{\bm{\alpha}}$, it often harms the generalization performance of $h_{\bm{\alpha}}$~\citep{Nalenz:AISTATS2022}. 
Therefore, we need to find a weight vector $\bm{\alpha}$ that is as sparse as possible while maintaining the accuracy of $h_{\bm{\alpha}}$. 

\begin{figure}[t]
    \begin{subfigure}[h]{0.45\textwidth}
        \centering
        \includegraphics[width=\linewidth]{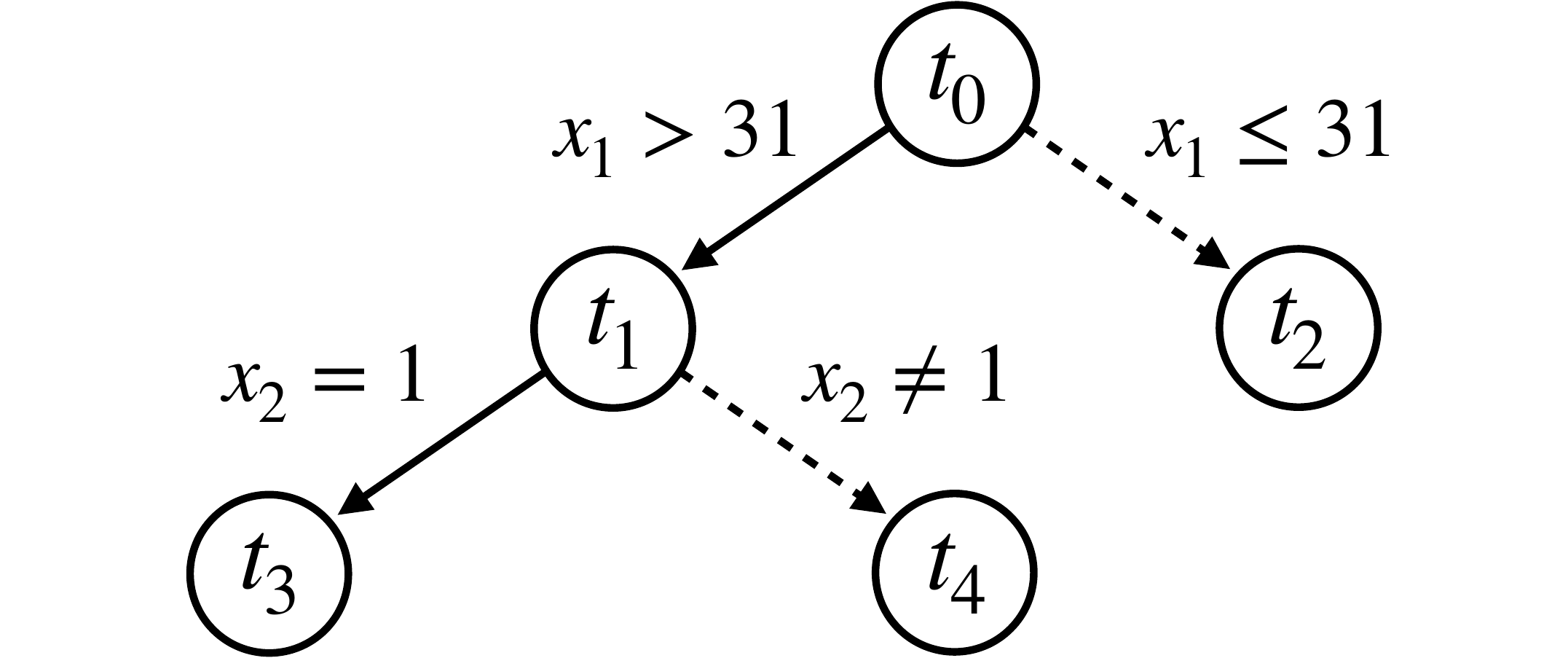}
        \caption{Decision tree}
        \label{tab:re:rule:dt}
    \end{subfigure}
    \hfill
    \begin{subfigure}[h]{0.55\textwidth}
        \centering
        \begin{tabular}{cl}
        \toprule
            \;\;\;\textbf{Node}\;\;\; & \textbf{Rule} \\
        \midrule
            $t_1$ & $r_1(\bm{x}) = \ind{x_1 > 31}$ \\
            $t_2$ & $r_2(\bm{x}) = \ind{x_1 \leq 31}$ \\
            $t_3$ & $r_3(\bm{x}) = \ind{x_1 > 31} \cdot \ind{x_2 = 1}$ \\
            $t_4$ & $r_4(\bm{x}) = \ind{x_1 > 31} \cdot \ind{x_2 \not= 1}$ \\
        \bottomrule
        \end{tabular}
        \caption{Decomposed rules}
        \label{tab:re:rule:decompose}
     \end{subfigure}
    \caption{Examples of a decision tree and its decomposed rules $R = \set{r_1, r_2, r_3, r_4}$. }
    \label{fig:re:rule}
\end{figure}

\subsection{RuleFit}
A popular practical framework for learning rule ensembles is \emph{RuleFit} proposed by \citet{Friedman:AAS2008}. 
RuleFit consists of two steps; it first extracts a set of candidate rules $R$ from a learned ensemble of decision trees, and then optimizes a sparse weight vector $\bm{\alpha}$ through the $\ell_1$-regularization. 

% \subsubsection{Rule extraction}
\paragraph{Rule Extraction.}
Given a sample $S$, RuleFit first learns a tree ensemble model, such as random forests~\citep{Breiman:ML2001} and gradient boosting decision trees~\citep{Ke:NIPS2017}, on $S$. 
It then decomposes each decision tree of the ensemble into a set of rules and collects the decomposed rules over the entire ensemble as $R$. 
Because there exist several fast algorithms for learning tree ensemble models, we can efficiently obtain a set of rules $R$ that can improve the accuracy of a rule ensemble $h_{\bm{\alpha}}$ on $S$. 

\cref{fig:re:rule} shows an example of a decision tree and its decomposed rules. 
By collecting the branching conditions on the path between the root and each node of the decision tree in \cref{fig:re:rule}a, we can obtain the set of rules shown in \cref{fig:re:rule}b.  
For example, we can obtain the rule $r_3$ from the node $t_3$ by combining the conditions $x_1 > 31$ and $x_2 = 1$ on the path between the root $t_0$ and $t_3$.

% \subsubsection{Weight optimization}
\paragraph{Weight Optimization.}
For a set of extracted rules $R = \set{r_1, \dots, r_M}$, we optimize a weight vector $\bm{\alpha} \in \mathbb{R}^{M}$ on the sample $S$. 
Because the size of the extracted rules $M$ grows in proportion to the total number of leaves in the tree ensemble, we need to keep $\bm{\alpha}$ as sparse as possible to avoid overfitting and ensure interpretability. 
To learn a sparse weight vector $\bm{\alpha}$, RuleFit uses the $\ell_1$-regularization (i.e., the Lasso penalty~\citep{Tibshirani:JRSS1994}). 
Specifically, RuleFit solves the following learning problem. 

\begin{problem}[RuleFit]\label{prob:rulefit}
    For a given sample $S = \set{(\bm{x}_n, y_n)}_{n=1}^{N}$,
    set of $M$ rules $R = \set{r_1, \dots, r_M}$, 
    and trade-off parameter $\gamma \geq 0$, 
    find an optimal solution $\bm{\alpha}^\ast \in \mathbb{R}^M$ to the following problem:
    \begin{align*}
        \bm{\alpha}^\ast = \argmin_{\bm{\alpha} \in \mathbb{R}^M} L(\bm{\alpha} \mid S) + \gamma \cdot \Omega_1(\bm{\alpha}),
    \end{align*}
    where $L(\bm{\alpha} \mid S) \coloneqq \frac{1}{N} \sum_{n=1}^{N} l(y_n, f_{\bm{\alpha}}(\bm{x}_n))$ is the empirical risk on $S$, and $\Omega_1(\bm{\alpha}) \coloneqq \| \bm{\alpha} \|_1$ is the $\ell_1$-regularization term that promotes the sparsity of $\bm{\alpha}$. 
\end{problem}

Note that we can efficiently solve \cref{prob:rulefit} using the existing algorithms for learning generalized additive models with the $\ell_1$-regularization when the loss function $l$ is convex~\citep{Friedman:TR2003}. 
A common choice of the loss function $l$ for a binary classification task is the logistic loss $l(y, f_{\bm{\alpha}}(\bm{x})) = \log(1 + e^{- y \cdot f_{\bm{\alpha}}(\bm{x})})$~\citep{Wei:ICML2019,Ustun:JMLR2019}.

\section{Problem Formulation}
This section presents our proposed framework, named \emph{Locally Interpretable Rule Ensemble~(LIRE)}. 
We introduce \emph{local interpretability}, a new concept of interpretability for rule ensembles that is evaluated by the total number of rules required to express individual predictions, rather than the model itself. 
Then, we propose a regularizer that promotes local interpretability, and formulate the task of learning a rule ensemble with our local interpretability regularizer.

\subsection{Local Interpretability of Rule Ensemble}
In general, the interpretability of a rule ensemble $h_{\bm{\alpha}}$ is evaluated by the total number of rules with non-zero weights~\citep{Friedman:AAS2008,Wei:ICML2019}. 
Let $\supp{\bm{\alpha}} \coloneqq \set{m \in [M] \mid \alpha_m \not= 0}$ be the set of rules with non-zero weights, which we call the \emph{support} of $h_{\bm{\alpha}}$. 
By definition, a rule ensemble classifier $h_{\bm{\alpha}}$ can be expressed using only weighted rules in the support, i.e., $h_{\bm{\alpha}}(x) = \operatorname{sgn}( \sum_{m \in \supp{\bm{\alpha}}} \alpha_m \cdot r_m(x) )$. 
To ensure interpretability, the existing methods reduce the support size $|\supp{\bm{\alpha}}|$ by the $\ell_1$-regularization~\citep{Tibshirani:JRSS1994}. 
In practice, however, since there is a trade-off between the support size of a rule ensemble and its generalization performance, we often need to compromise interpretability to maintain accuracy~\citep{Nalenz:AISTATS2022}. 

On the other hand, not all of the rules with non-zero weights are used for making the individual prediction $h_{\bm{\alpha}}(\bm{x})$ of each input $\bm{x} \in \mathcal{X}$. 
This is because a rule $r_m$ with $\alpha_m \not= 0$ but $r_m(\bm{x}) = 0$ does not contribute to the prediction result $h_{\bm{\alpha}}(\bm{x})$ by the definition of rule ensembles. 
It suggests that we only need the rules $r_m$ with $\alpha_m \not= 0$ and $r_m(\bm{x}) = 1$ to express the individual prediction $h_{\bm{\alpha}}(\bm{x})$ for a given input $\bm{x}$. 
In some practical situations (e.g., loan approvals and medical diagnoses), even if a model itself is too complex to interpret, it is often sufficient to explain its individual predictions in an interpretable manner~\citep{Rudin:NMI2019,Yang:ICML2022,Caruana:KDD2015}. 

Motivated by the above facts, we introduce a new concept of interpretability for a rule ensemble model from the perspective of its individual predictions rather than the model itself. 
We focus on learning a rule ensemble that can express individual predictions using a few rules with non-zero weights, which we call \emph{local interpretability}. 
To evaluate the local interpretability of a rule ensemble $h_{\bm{\alpha}}$, we define the \emph{local support} of $h_{\bm{\alpha}}$ for an input $\bm{x}$ by
\begin{align*}
    \lsupp{\bm{\alpha} \mid \bm{x}} \coloneqq \set{m \in [M] \mid \alpha_m \not= 0 \land r_m(\bm{x}) = 1}.
\end{align*}
By definition, the prediction $h_{\bm{\alpha}}(\bm{x})$ for $\bm{x}$ can be expressed using only weighted rules in the local support, i.e., $h_{\bm{\alpha}}(x) = \operatorname{sgn}( \sum_{m \in \lsupp{\bm{\alpha} \mid x}} \alpha_m )$. 
To ensure local interpretability, we aim to reduce the local support size $|\lsupp{\bm{\alpha} \mid \bm{x}}|$ for each $\bm{x}$ in a given sample $S$ as much as possible. 

To promote the local interpretability of a rule ensemble $h_{\bm{\alpha}}$, we propose a \emph{local interpretability regularizer}. 
By definition, $|\lsupp{\bm{\alpha} \mid \bm{x}}| \leq |\supp{\bm{\alpha}}|$ holds for any $\bm{x} \in \mathcal{X}$, which implies that reducing the support size $|\supp{\bm{\alpha}}|$ leads to a decrease in the upper bound on the local support size $|\lsupp{\bm{\alpha} \mid \bm{x}}|$. 
However, we need to avoid achieving local interpretability by reducing the support size since it may harm the accuracy of $h_{\bm{\alpha}}$. 
% However, because the support size sufficient to maintain accuracy depends on the task and application, it is desirable to avoid achieving local interpretability by reducing the support size. 
% Because the support size sufficient to maintain accuracy depends on the task, we need to avoid achieving local interpretability by reducing the support size. 
To control the local support size separately from the support size, we define our local interpretability regularizer $\Omega_\mathrm{L}$ as
\begin{align*}
    \Omega_\mathrm{L}(\bm{\alpha} \mid S) \coloneqq \frac{1}{N} \sum_{n=1}^{N} \frac{|\lsupp{\bm{\alpha} \mid \bm{x}_n}|}{|\supp{\bm{\alpha}}|}.
\end{align*}
That is, we evaluate the ratio of the local support size $|\lsupp{\bm{\alpha} \mid \bm{x}}|$ to the support size $|\supp{\bm{\alpha}}|$ for each input $\bm{x}$ in a sample $S$ and average them over $S$. 
Minimizing our regularizer $\Omega_\mathrm{L}$ allows us to reduce the average local support size $|\lsupp{\bm{\alpha} \mid \bm{x}}|$ without directly constraining the support size $|\supp{\bm{\alpha}}|$.

\subsection{Locally Interpretable Rule Ensemble}
We now formulate our problem of learning a \emph{locally interpretable rule ensemble (LIRE)} classifier. 
As with RuleFit~\citep{Friedman:AAS2008}, we assume that we have a set of rules $R$ by extracting them from a tree ensemble leaned on a given sample $S$ in advance. 
Then, we learn a weight vector $\bm{\alpha}$ that minimizes the empirical risk $L(\bm{\alpha} \mid S) = \frac{1}{N} \sum_{n=1}^{N} l(y_n, f_{\bm{\alpha}}(\bm{x}_n))$ on $S$ with the regularizers on its interpretability. 

\begin{problem}[LIRE]\label{prob:lire}
    For a given sample $S = \set{(\bm{x}_n, y_n)}_{n=1}^{N}$,
    set of $M$ rules $R = \set{r_1, \dots, r_M}$, 
    and hyper-parameters $\gamma, \lambda \geq 0$, 
    find an optimal solution $\bm{\alpha}^\ast \in \mathbb{R}^M$ to the following problem:
    \begin{align*}
        \bm{\alpha}^\ast = \argmin_{\bm{\alpha} \in \mathbb{R}^M} G_{\gamma, \lambda}(\bm{\alpha} \mid S) \coloneqq L(\bm{\alpha} \mid S) + \gamma \cdot \Omega_\mathrm{G}(\bm{\alpha}) + \lambda \cdot \Omega_\mathrm{L}(\bm{\alpha} \mid S),
    \end{align*}
    where $\Omega_\mathrm{G}(\bm{\alpha}) \coloneqq | \supp{\bm{\alpha}} |$ is the global interpretability regularizer, and $\Omega_\mathrm{L}(\bm{\alpha} \mid S) = \frac{1}{N} \sum_{n=1}^{N} \frac{|\lsupp{\bm{\alpha} \mid \bm{x}_n}|}{|\supp{\bm{\alpha}}|}$ is the local interpretability regularizer. 
\end{problem}

By solving \cref{prob:lire}, we are expected to obtain an accurate rule ensemble $h_{\bm{\alpha}^\ast}$ whose local support size $|\lsupp{\bm{\alpha} \mid \bm{x}}|$ is small on average. 
We can control the strength of the global and local interpretability regularizers by tuning the parameters $\gamma$ and $\lambda$. 
Note that our global interpretability regularizer $\Omega_\mathrm{G}(\bm{\alpha})$ is equivalent to the $\ell_0$-regularization term $\| \bm{\alpha} \|_0$, and the $\ell_1$-regularization term $\| \bm{\alpha} \|_1$ used in RuleFit can be regarded as a convex relaxation of $\| \bm{\alpha} \|_0$~\citep{Tibshirani:JRSS1994}. 
We employ $\Omega_\mathrm{G}(\bm{\alpha})$ to penalize the support size of $\bm{\alpha}$ more directly than the $\ell_1$-regularization without degrading the generalization performance of $h_{\bm{\alpha}}$~\citep{Dedieu:JMLR2021,Liu:AISTATS2022}.

\section{Optimization}
In this section, we propose a learning algorithm for a LIRE classifier. 
Because our local interpretability regularizer $\Omega_\mathrm{L}$ is neither differentiable nor convex due to its combinatorial nature, efficiently finding an exact optimal solution to \cref{prob:lire} is computationally challenging, even if the loss function $l$ and the global interpretability regularizer $\Omega_\mathrm{G}$ are differentiable and convex. 
To avoid this difficulty, we propose to extend the existing fast algorithms for learning $\ell_0$-regularized generalized additive classifiers~\citep{Dedieu:JMLR2021,Liu:AISTATS2022} to our learning problem.

\begin{algorithm}[t]
\flushleft
    % \small
    \caption{Coordinate descent algorithm with local search for learning LIRE.}
    \begin{algorithmic}[1]
        \Require{
            a sample $S$,
            a set of rules $R = \set{r_1, \dots, r_M}$ extracted from a tree ensemble learned on $S$ in advance, 
            trade-off parameters $\gamma, \lambda \geq 0$,
            an initial weight vector $\bm{\alpha}^{(0)} \in \mathbb{R}^M$,
            and a maximum number of iterations $I \in \mathbb{N}$ (e.g., $I = 5000$). 
        }
        \Ensure{a weight vector $\bm{\alpha}^{(i)}$.}
        \For{$i = 1, 2, \dots, I$}
            \State $\bm{\alpha}^{(i)} \leftarrow \bm{\alpha}^{(i-1)}$;
            \For{$m \in \supp{\bm{\alpha}^{(i)}}$}
                \State $\alpha^{(i)}_{m} \leftarrow \argmin_{\alpha_{m} \in \mathbb{R}} G_{\gamma, \lambda}(\bm{\alpha}^{(i)}_{-m} + \alpha_{m} \cdot \bm{e}_{m} \mid S)$; \Comment{$\bm{\alpha}^{(i)}_{-m} \coloneqq \bm{\alpha}^{(i)} - \alpha^{(i)}_m \cdot \bm{e}_m$} \label{line:delete}
                \If{$\alpha^{(i)}_m = 0$}
                    \State \textbf{break}; \Comment{Delete $m$}
                \EndIf
                \For{$m' \in [M] \setminus \supp{\bm{\alpha}^{(i)}}$}
                    \State $\alpha^\ast_{m'} \leftarrow \argmin_{\alpha_{m'} \in \mathbb{R}} G_{\gamma, \lambda}(\bm{\alpha}^{(i)}_{-m} + \alpha_{m'} \cdot \bm{e}_{m'} \mid S)$; \label{line:replace}
                    \If{$G_{\gamma, \lambda}(\bm{\alpha}^{(i)}_{-m} + \alpha^\ast_{m'} \cdot \bm{e}_{m'} \mid S) < G_{\gamma, \lambda}(\bm{\alpha}^{(i)} \mid S)$}
                        \State $\bm{\alpha}^{(i)} \leftarrow \bm{\alpha}^{(i)} - \alpha^{(i)}_m \cdot \bm{e}_m + \alpha^\ast_{m'} \cdot \bm{e}_{m'}$; \Comment{Delete $m$ and insert $m'$}
                        \State \textbf{break};
                    \EndIf
                \EndFor
                \If{$\alpha^{(i)}_m = 0$}
                    \State \textbf{break}; 
                \EndIf
            \EndFor
            \If{$\bm{\alpha}^{(i)} = \bm{\alpha}^{(i-1)}$}
                \State \textbf{break};
            \Else
                \While{not convergence} \Comment{Minimize $G_{\gamma, \lambda}$ with $\gamma = \lambda = 0$}
                    \For{$m \in \supp{\bm{\alpha}^{(i)}}$}
                        \State $\alpha^{(i)}_{m} \leftarrow \argmin_{\alpha_{m} \in \mathbb{R}} L(\bm{\alpha}^{(i)}_{-m} + \alpha_{m} \cdot \bm{e}_{m} \mid S)$; \label{line:finetune}
                    \EndFor
                \EndWhile
            \EndIf
        \EndFor
        % \State \textbf{return} $\bm{\alpha}^{(i)}$;
    \end{algorithmic}
    \label{algo:cdls}
\end{algorithm}

\subsection{Learning Algorithm}
\cref{algo:cdls} presents an algorithm for solving \cref{prob:lire}. 
Our algorithm is based on a coordinate descent algorithm with local search proposed by \citet{Dedieu:JMLR2021} and \citet{Liu:AISTATS2022}. 
Given an initial weight vector $\bm{\alpha}^{(0)}$, which can be efficiently obtained in practice by solving \cref{prob:rulefit}, we iteratively update it until the update converges or the number of iterations reaches a given maximum number $I \in \mathbb{N}$.
% In our experiments, we set $I = 5000$. 
Each iteration $i \in [I]$ consists of the following steps to update the current weight vector $\bm{\alpha}^{(i)}$:
\begin{description}
    \item[Step 1.]
    For each rule $m \in \supp{\bm{\alpha}^{(i)}}$ in the current support, we update its weight $\alpha^{(i)}_m$ so that our learning objective function $G_{\gamma, \lambda}$ is minimized with respect to the coordinate $\alpha_m$ (line~\ref{line:delete}). 
    If the weight $\alpha^{(i)}_m$ is updated to $0$, then we delete $m$ from the support and go to Step 3. 
    \item[Step 2.]
    For each rule $m \in \supp{\bm{\alpha}^{(i)}}$, we attempt to replace $m$ with another rule $m' \in [M] \setminus \supp{\bm{\alpha}^{(i)}}$ outside the support~(line~\ref{line:replace}). 
    For efficiency, when we find a rule $m'$ that improves the objective value, we immediately delete $m$ by setting its weight to $0$ and add $m'$ to the support with a weight $\alpha^\ast_{m'}$. 
    \item[Step 3.]
    If the support of $\bm{\alpha}^{(i)}$ is changed from that of $\bm{\alpha}^{(i-1)}$, we optimize the weight of each rule in $\supp{\bm{\alpha}^{(i)}}$ so that the empirical risk $L$ is minimized (line~\ref{line:finetune}), and go to the next iteration $i+1$. 
\end{description}

\subsection{Analytical Solution to Coordinate Update}
In lines~\ref{line:delete}, \ref{line:replace}, and \ref{line:finetune} of \cref{algo:cdls}, we need to update the weight of each rule $m$ so that our learning objective $G_{\gamma, \lambda}$ is minimized with respect to the coordinate $\alpha_m$. 
We show that we can obtain an analytical solution to this coordinate update problem if we employ the exponential loss $l(y, f_{\bm{\alpha}}(\bm{x})) = e^{-y \cdot f_{\bm{\alpha}}(\bm{x})}$ as the loss function $l$ in $G_{\gamma, \lambda}$. 
As with the previous study on the $\ell_0$-regularized classifier~\citep{Liu:AISTATS2022}, our idea is based on the technique of AdaBoost~\citep{Freund:JCSS1997}, which iteratively updates the weight of each base learner with an analytical solution that minimizes the exponential loss~\citep{Mohri:2012:Foundations}. 
In \cref{theo:analytical}, we extend the technique of AdaBoost to obtain an analytical solution to our coordinate update problem with $G_{\gamma, \lambda}$. 
\begin{theorem}\label{theo:analytical}
For a weight vector $\bm{\alpha} \in \mathbb{R}^M$ and a rule $m \in [M]$ with $\alpha_m = 0$, we consider the coordinate update problem that is formulated as follows:
\begin{align*}
    \alpha^\ast_{m} = \argmin_{\alpha'_{m} \in \mathbb{R}} G_{\gamma, \lambda}(\bm{\alpha} + \alpha'_{m} \cdot \bm{e}_{m} \mid S),
\end{align*}
where $\bm{e}_m = (e_{m, 1}, \dots, e_{m, M}) \in \set{0,1}^M$ is a vector with $e_{m ,m} = 1$ and $e_{m, m'} = 0$ for all $m' \in [M] \setminus \set{m}$. 
If the loss function $l$ in the objective function $G_{\gamma, \lambda}$ is the exponential loss $l(y, f_{\bm{\alpha}}(\bm{x})) = e^{-y \cdot f_{\bm{\alpha}}(\bm{x})}$, then we have
\begin{align*}
    \alpha^\ast_m =
    \begin{cases}
        0 & \text{if } \epsilon_m^- \in [\frac{1}{2}-B_m, \frac{1}{2}+B_m], \\
        \frac{1}{2} \ln\frac{1-\epsilon_m^-}{\epsilon_m^-} & \text{otherwise},
    \end{cases}
\end{align*}
where 
$\epsilon_m^- = \frac{\frac{1}{N} \sum_{n \in [N]: y_n \cdot r_m(\bm{x}_n) = -1} l(y_n, f_{\alpha}(\bm{x}_n))}{\epsilon_m}$, 
$\epsilon_m = \frac{1}{N} \sum_{n \in [N]: r_m(\bm{x}_n) = 1} l(y_n, f_{\bm{\alpha}}(\bm{x}_n))$, 
$B_m = \frac{\sqrt{C_m \cdot (2 \cdot \epsilon_m - C_m)}}{2 \cdot \epsilon_m}$, 
$C_m = \gamma + \lambda \cdot \frac{p_m - \Omega_\mathrm{L}(\bm{\alpha} \mid S)}{1 + |\supp{\bm{\alpha}}|}$, 
and $p_m = \frac{1}{N}\sum_{n=1}^{N} r_m(\bm{x}_n)$. 
\end{theorem}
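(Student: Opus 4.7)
The plan is to substitute the exponential loss into $G_{\gamma,\lambda}$, decompose the objective into a closed-form expression in the single scalar $\alpha'_m$, and then perform a univariate minimization whose non-differentiable part (the jump at $\alpha'_m=0$ induced by $\Omega_\mathrm{G}$ and $\Omega_\mathrm{L}$) is handled by a separate comparison between the interior optimum and the value at zero.

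First I would rewrite the empirical risk. Splitting the sum over $n$ by whether $r_m(\bm{x}_n)=0$ or $1$, and within the latter by the sign of $y_n$, the exponential-loss identity $e^{-y_n(f_{\bm{\alpha}}(\bm{x}_n)+\alpha'_m r_m(\bm{x}_n))} = l(y_n,f_{\bm{\alpha}}(\bm{x}_n))\,e^{-y_n\alpha'_m r_m(\bm{x}_n)}$ yields
\begin{align*}
L(\bm{\alpha}+\alpha'_m\bm{e}_m\mid S) = L(\bm{\alpha}\mid S) - \epsilon_m + (1-\epsilon_m^-)\epsilon_m e^{-\alpha'_m} + \epsilon_m^-\epsilon_m e^{\alpha'_m},
\end{align*}
by the definitions of $\epsilon_m$ and $\epsilon_m^-$ in the statement.

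Next I would analyze the regularizers. Because $\alpha_m=0$, the case $\alpha'_m=0$ leaves $\bm{\alpha}$ unchanged, while $\alpha'_m\neq 0$ increases the support size by exactly one and adds $r_m(\bm{x}_n)$ to each local support $|\lsupp{\bm{\alpha}\mid \bm{x}_n}|$. Writing $k=|\supp{\bm{\alpha}}|$ and using $\frac{1}{N}\sum_n r_m(\bm{x}_n)=p_m$ and $\frac{1}{N}\sum_n|\lsupp{\bm{\alpha}\mid \bm{x}_n}| = k\cdot\Omega_\mathrm{L}(\bm{\alpha}\mid S)$, I get for $\alpha'_m\neq 0$
\begin{align*}
\gamma\,\Omega_\mathrm{G}(\bm{\alpha}+\alpha'_m\bm{e}_m)+\lambda\,\Omega_\mathrm{L}(\bm{\alpha}+\alpha'_m\bm{e}_m\mid S) = \gamma k + \lambda\,\Omega_\mathrm{L}(\bm{\alpha}\mid S) + C_m,
\end{align*}
with $C_m = \gamma + \lambda\cdot\frac{p_m - \Omega_\mathrm{L}(\bm{\alpha}\mid S)}{k+1}$ exactly as in the theorem. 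Subtracting the value at $\alpha'_m=0$ gives the increment
\begin{align*}
\Delta G(\alpha'_m) = \epsilon_m\bigl[(1-\epsilon_m^-)e^{-\alpha'_m} + \epsilon_m^- e^{\alpha'_m} - 1\bigr] + C_m \quad (\alpha'_m\neq 0).
\end{align*}

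Then I minimize $\Delta G$ over $\alpha'_m\in\mathbb{R}\setminus\{0\}$. Setting the derivative to zero gives the classical AdaBoost update $\alpha'_m=\tfrac{1}{2}\ln\frac{1-\epsilon_m^-}{\epsilon_m^-}$, and substituting back yields the minimum value $\Delta G^\star = 2\epsilon_m\sqrt{\epsilon_m^-(1-\epsilon_m^-)}-\epsilon_m+C_m$. The candidate $\alpha^\ast_m=0$ beats the interior optimum iff $\Delta G^\star\ge 0$. Rewriting $\epsilon_m^-(1-\epsilon_m^-)=\tfrac{1}{4}-(\tfrac{1}{2}-\epsilon_m^-)^2$, the condition becomes $4\epsilon_m^2(\tfrac{1}{2}-\epsilon_m^-)^2 \le \epsilon_m^2 - (\epsilon_m-C_m)^2 = C_m(2\epsilon_m-C_m)$, i.e.\ $(\tfrac{1}{2}-\epsilon_m^-)^2\le B_m^2$, which is precisely $\epsilon_m^-\in[\tfrac{1}{2}-B_m,\tfrac{1}{2}+B_m]$. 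Assembling the two cases gives the stated formula.

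The main obstacle is the step that turns the inequality $\Delta G^\star\ge 0$ into the interval condition: one must square both sides carefully and justify it under $0\le C_m\le 2\epsilon_m$ (the regime in which $B_m$ is real), and separately note the degenerate cases $C_m<0$ (where $\Delta G^\star<0$ and a nonzero weight is always preferred) and $C_m>2\epsilon_m$ (where $\Delta G^\star>0$ and $\alpha_m^\ast=0$ is always optimal, consistent with the convention that the interval covers $[0,1]$). Everything else is routine substitution and univariate calculus.
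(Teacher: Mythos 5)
Your proposal is correct and follows essentially the same route as the paper's proof: you compute the constant jump $C_m$ in the regularizers when $\alpha'_m$ becomes nonzero (the paper isolates this as a separate lemma), derive the AdaBoost-style closed-form minimizer $\tfrac{1}{2}\ln\tfrac{1-\epsilon_m^-}{\epsilon_m^-}$ of the exponential-loss part, and convert the comparison between the attainable loss decrease and $C_m$ into the quadratic interval condition on $\epsilon_m^-$. Your explicit treatment of the sign conditions required to square the inequality (and of the degenerate regimes $C_m<0$ and $C_m>2\epsilon_m$, where $B_m$ is not real) is slightly more careful than the paper, which passes over these cases silently.
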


\cref{theo:analytical} implies that a rule $m$ outside the current support does not improve the objective value if $\epsilon_m^- \in [\frac{1}{2}-B_m, \frac{1}{2}+B_m]$; otherwise, we can update its weight as $\alpha^\ast_m = \frac{1}{2} \ln\frac{1-\epsilon_m^-}{\epsilon_m^-}$. 
By \cref{theo:analytical}, we can solve the coordinate update problem in \cref{algo:cdls} analytically. 
Our proof of \cref{theo:analytical} is shown in Appendix. 

In our experiments, we employed the exponential loss as $l$ for learning LIRE classifiers. 
In addition to the existence of an analytical solution, another advantage of the exponential loss is that we can efficiently compute the objective value of the updated weight vector $\bm{\alpha} + \alpha^\ast_{m} \cdot \bm{e}_{m}$ by simple mathematical operations. 
This is mainly because we can update the empirical risk $L$ in our learning objective $G_{\gamma, \lambda}$ by multiplying each loss term $l(y_n, f_{\bm{\alpha}}(\bm{x}_n))$ by $e^{-y_n \cdot \alpha^\ast_{m} \cdot r_m(\bm{x}_n)}$ if $l$ is the exponential loss~\citep{Mohri:2012:Foundations}. 
Note that the exponential loss is known to perform well similar to the other popular loss functions, such as the logistic loss~\citep{Liu:AISTATS2022}.

\section{Experiments}\label{sec:experiments}
To investigate the performance of our LIRE, we conducted numerical experiments on real datasets. 
All the code was implemented in Python 3.7 with scikit-learn 1.0.2 and is available at \url{https://github.com/kelicht/lire}. 
All the experiments were conducted on Ubuntu 20.04 with Intel Xeon E-2274G 4.0 GHz CPU and 32 GB memory. 

Our experimental evaluation answers the following questions:
(1)~How is the trade-off between the accuracy and interpretability of LIRE compared to RuleFit? 
(2)~How does our local interpretability regularizer affect the accuracy and interpretability of rule ensembles? 
(3)~How is the performance of LIRE compared to the baselines on the benchmark datasets? 
Owing to page limitations, the complete settings and results (e.g., dataset details, hyper-parameter tuning, other accuracy criteria, and statistical tests) are shown in Appendix.

\begin{figure}[t]
    \centering
    \includegraphics[width=\linewidth]{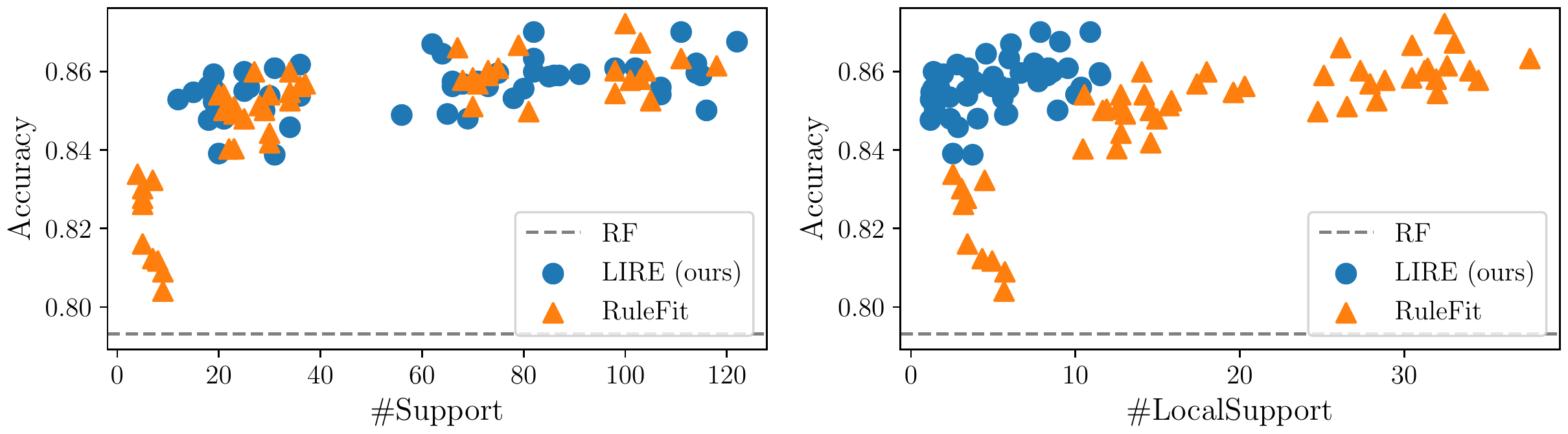}
    \caption{Experimental results on the accuracy-interpretability trade-off analysis. }
    \label{fig:exp:tradeoff}
\end{figure}
\subsection{Accuracy-Interpretability Trade-Off}
First, we examine the trade-off between the accuracy and interpretability of our LIRE compared to RuleFit. 
We used the Adult dataset ($N=32561, D=108$)~\citep{Dua:2019} and conducted $10$-fold cross-validation (CV). 
In each fold, we trained a random forest (RF) with $100$ decision trees and obtained $M=1220.3$ rules on average as $R$. 
For interpretability, each decision tree is trained with a maximum depth of $3$; that is, the length of each rule is less than or equal to $3$. 
Then, we trained rule ensembles by RuleFit and our LIRE and measured their test accuracy, support size (\#Support), and average local support size on the test set (\#LocalSupport). 
To obtain models with different support sizes, we trained multiple models by varying the hyper-parameter $\gamma$. 
For LIRE, we set $\lambda = 1.0$. 

\cref{fig:exp:tradeoff} shows the results, where the left (resp.\ right) figure presents the scatter plot between the test accuracy and support size (resp.\ local support size). 
From \cref{fig:exp:tradeoff}, we can see that LIRE 
(1)~attained a similar trade-off between accuracy and support size to RuleFit, and 
(2)~achieved lower local support size than RuleFit without degrading accuracy. 
These results suggest that our LIRE could obtain more locally interpretable rule ensembles than RuleFit while maintaining similar accuracy and support size. 
Thus, we have confirmed that our method can realize local interpretability without compromising accuracy. 

\begin{figure}[t]
    \centering
    \includegraphics[width=\linewidth]{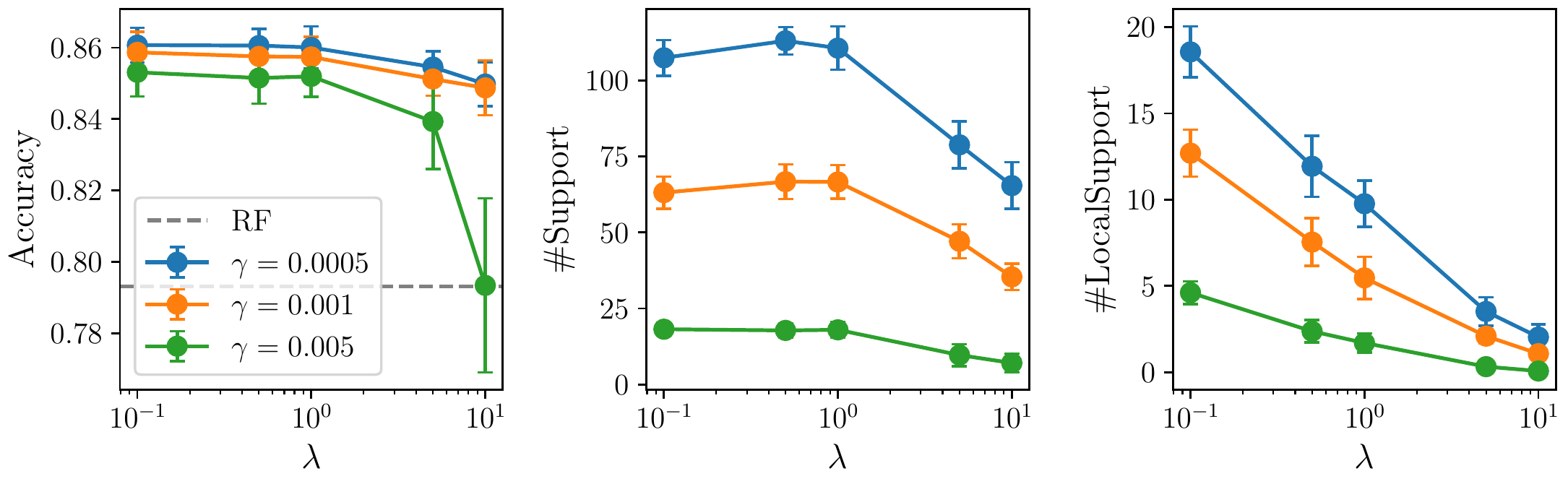}
    \caption{Experimental results on the sensitivity of the trade-off parameter $\lambda$.}
    \label{fig:exp:sensitivity}
\end{figure}
\subsection{Effect of Local Interpretability Regularizer}
Next, we analyze the effect of our local interpretability regularizer $\Omega_\mathrm{L}$ on rule ensembles by varying the hyper-parameter $\lambda$. 
As in the previous experiment, we used the Adult dataset and conducted $10$-fold CV. 
We trained rule ensembles by varying $\lambda$, and measured their average accuracy, support size, and local support size. 
To control the support size, we set $\gamma$ to three different values. 

\cref{fig:exp:sensitivity} shows the average accuracy, support size, and local support size for each $\lambda$. 
For $\gamma \in \set{0.0005, 0.001}$, we could reduce the local support size without significantly degrading accuracy by increasing $\lambda$, which indicates that we could obtain accurate and locally interpretable rule ensembles. 
Furthermore, we can see that the support size also decreased for large $\lambda$ while maintaining accuracy. 
In contrast, for $\gamma = 0.005$, the average accuracy decreased when $\lambda > 1.0$. 
This result suggests that our local interpretability regularizer $\Omega_\mathrm{L}$ may harm accuracy when $\gamma$ is large, i.e., the support size is small. 
Thus, to maintain accuracy with $\Omega_\mathrm{L}$, we need to keep the support size larger to some extent by setting $\gamma$ to be smaller. 
These observations give us insight into the choice of $\lambda$ and $\gamma$ in practice. 
% For $\gamma \in \set{0.001, 0.005}$, we could maintain similar accuracy regardless of $\lambda$, while reducing the average support size and local support size. 
% For example, the average accuracy, support size, and local support size for $\gamma = 0.001$ and $\lambda = 1.0$ were respectively 85.7\%, 66.6, and 5.4, which indicates that we could obtain an accurate and locally interpretable rule ensemble. 

\begin{figure}[t]
    \centering
    \includegraphics[width=0.9\linewidth]{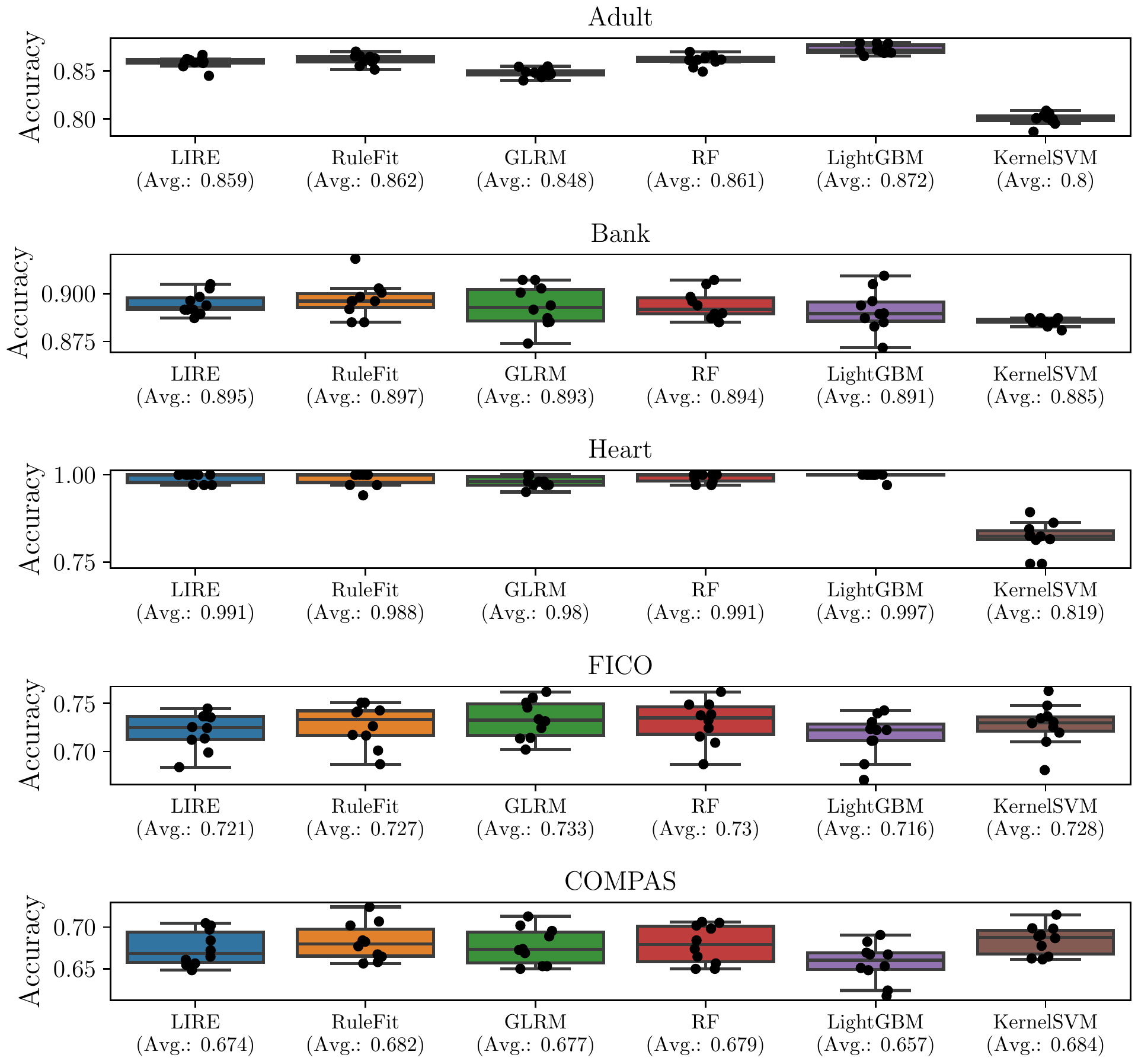}
    \caption{Experimental results on the test accuracy in 10-fold cross-validation. }
    \label{fig:exp:comparison}
\end{figure}
\begin{figure}[t]
    \centering
    \includegraphics[width=0.9\linewidth]{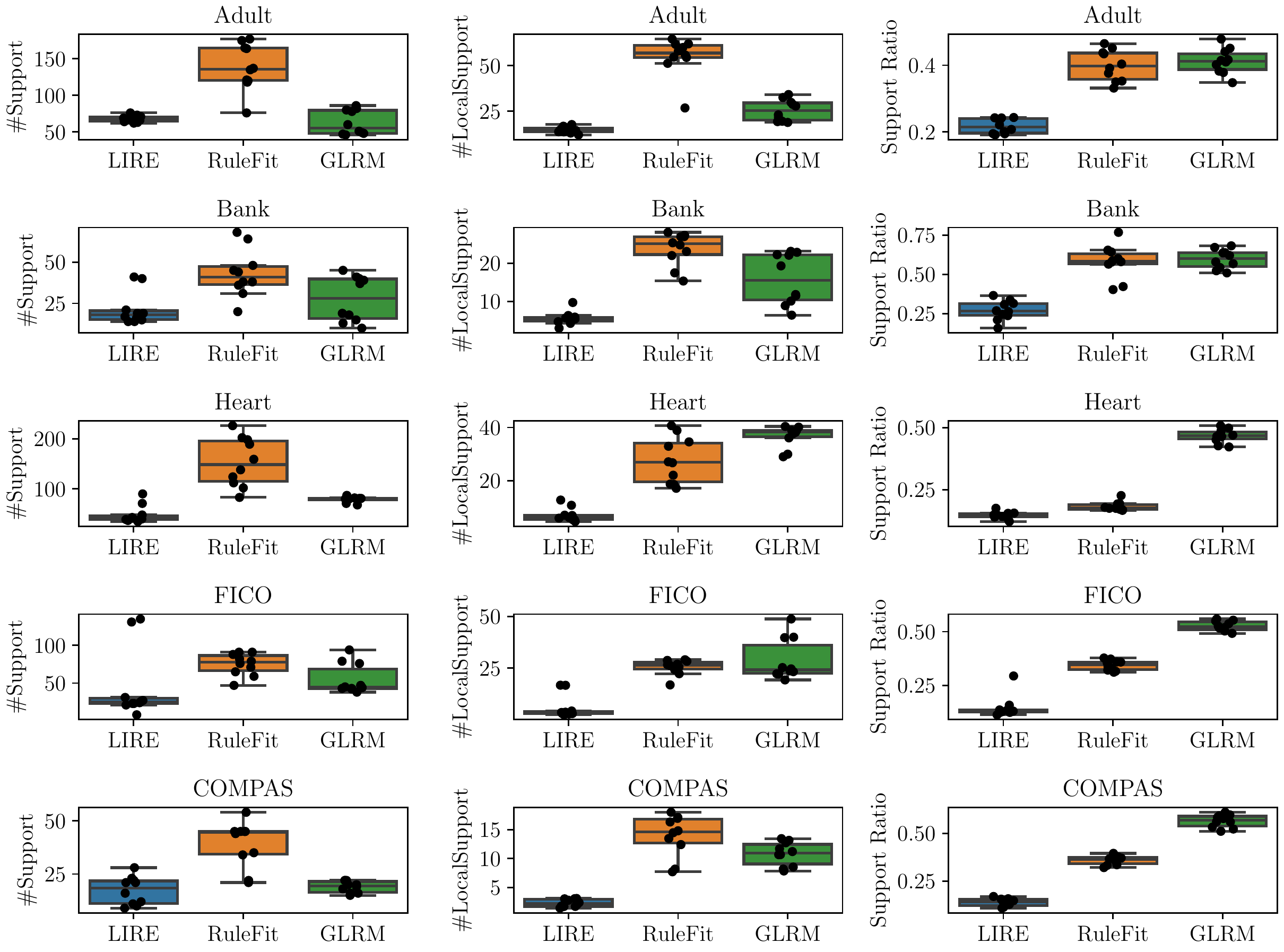}
    \caption{Experimental results on the support size in 10-fold cross-validation. }
    \label{fig:exp:support}
\end{figure}

\subsection{Performance Comparison}
Finally, we evaluate the performance of our LIRE on benchmark datasets in comparison with the existing methods. 
We used five datasets: three datasets are Adult, Bank, and Heart from the UCI repository~\citep{Dua:2019}, and two datasets are FICO~\citep{fico:2018} and COMPAS~\citep{compas:2016}. 
In addition to RuleFit, we compared LIRE with the generalized linear rule models~(GLRM)~\citep{Wei:ICML2019}, another existing method for learning rule ensembles by column generation. 
We also employed three complex models as baselines: RF~\citep{Breiman:ML2001}, LightGBM~\citep{Ke:NIPS2017}, and KernelSVM. 
In each fold, we tuned the hyper-parameters of each method through hold-out validation. 

\cref{fig:exp:comparison} shows the test accuracy of each method in $10$-fold CV. 
From \cref{fig:exp:comparison}, we can see that LIRE achieved comparable accuracy to the other rule ensembles, as well as complex models, regardless of the datasets. 
\cref{fig:exp:support} shows the results on the support size, local support size, and support ratio, which is defined as the ratio of \#LocalSupport to \#Support for each rule ensemble. 
We can see that LIRE stably achieved lower local support sizes and support ratios than RuleFit and GLRM. 
Furthermore, LIRE also achieved lower support sizes than RuleFit while maintaining similar accuracy, which may be caused by the effect of our global interpretability regularizer $\Omega_\mathrm{G}$, i.e., $\ell_0$-regularization. 
These results indicate that our LIRE achieved local interpretability while maintaining not only accuracy but also support size comparable to the baselines. 
Therefore, we have confirmed that we can learn more locally interpretable rule ensembles than the baselines without degrading accuracy in the benchmark datasets. 
% While there is no significant difference in the support size between the methods, the local support size and support ratio of LIRE were stably lower than RuleFit and GLRM. 

Regarding the computational time shown in Appendix, LIRE was slower than the baselines because its objective $G_{\gamma, \lambda}$ includes regularizers that have combinatorial nature. 
For example, the average computation time of LIRE, RuleFit, and GLRM on the Adult was $\bm{148.5}$, $\bm{9.658}$, and $\bm{75.93}$ seconds, respectively. 
However, \cref{fig:exp:comparison,fig:exp:support} indicate that LIRE achieved higher local interpretability than the baselines without degrading accuracy within a few minutes, even for the dataset with $N > 30000$ and the size of candidate rules with $M > 1000$.

\section{Related Work}
% In this section, we discuss our contributions to the communities in comparison with the existing studies. 

\subsection{Globally Interpretable Models}
This paper mainly relates to the communities of interpretable machine learning~\citep{Rudin:SS2022}. 
With the emerging trend of leveraging machine learning models in various high-stakes decision-making tasks, interpretable models, such as sparse linear models~\citep{Dedieu:JMLR2021,Liu:AISTATS2022,Ustun:JMLR2019} and rule models~\citep{Angelino:KDD2017,Lakkaraju:KDD2016,Hu:NIPS2019,Dash:NIPS2018,Yang:ICML2017}, have attracted increasing attention in recent years. 
Rule ensembles, also known as generalized linear rule models, are one of the popular rule models based on the linear combination of weighted rules~\citep{Friedman:AAS2008,Wei:ICML2019,Nalenz:AISTATS2022,Benard:AISTATS2021,Eckstein:ICML2017,Nakagawa:KDD2016,Kato:PAMI2023}. 

In general, rule ensembles have a trade-off between their accuracy and interpretability. 
To achieve good generalization, rule ensembles often need to include a sufficiently large number of weighted rules~\citep{Nalenz:AISTATS2022}. 
However, increasing the total number of weighted rules degrades the interpretability of a model because it makes the entire model hard for human users to understand~\citep{Rudin:NMI2019,Freitas:EN2014,Lipton:Queue2018,Doshi-Velez:arxiv2017}. 
To address this trade-off, most of the existing methods focus on achieving accuracy with as few weighted rules as possible through $\ell_1$-regularization~\citep{Friedman:AAS2008,Wei:ICML2019,Eckstein:ICML2017,Nakagawa:KDD2016,Kato:PAMI2023}. 

Our contribution is to propose another approach for addressing the accuracy-interpretability trade-off of rule ensembles.
We introduced a new concept of interpretability for a rule ensemble model, named local interpretability, focusing on its individual predictions rather than the model itself. 
Our concept has a similar spirit to the falling rule lists~\citep{Wang:AISTATS2015} and locally sparse neural networks~\citep{Yang:ICML2022} that can explain individual predictions in an interpretable manner, and is helpful in some practical situations where we need to explain undesired predictions for individual users, such as loan approvals and medical diagnoses~\citep{Rudin:NMI2019,Caruana:KDD2015,Zhang:DMKD2023}. 
We also empirically confirmed that we could learn more locally interpretable models than the existing methods while achieving comparable accuracy. 
Note that our framework can be combined with the existing practical techniques of rule ensembles, such as stabilization~\citep{Benard:AISTATS2021} and compression~\citep{Nalenz:AISTATS2022} of weighted rules.

\subsection{Local Explanation Methods}
% A recent major approach to realizing interpretability is post-hoc methods that extract local explanations of the individual predictions made by a learned model. 
Our approach is inspired by the recent methods that extract local explanations of the individual predictions made by a learned model. 
These methods provide local explanations in a post-hoc manner by locally approximating the decision boundary of a model by the linear models~\citep{Ribeiro:KDD2016,Lundberg:NIPS2017} or rule sets~\citep{Ribeiro:AAAI2018,Rudin:JMLR2023}. 
To improve the quality of local explanations, some papers have proposed to regularize a model during its training so that we can obtain better local explanations in terms of their local approximation fidelity~\citep{Plumb:NIPS2020} or consistency with domain knowledge~\citep{Ross:IJCAI2017,Rieger:ICML2020}. 
Our proposed method also regularizes a rule ensemble for the quality of local explanations, i.e., the total number of weighted rules used for making each individual prediction, during its training. 

While several post-hoc local explanation methods have been proposed, recent studies pointed out the issue of their faithfulness to an underlying model~\citep{Jacovi:ACL2020,Yoon:TMLR2022}.
Most of the existing methods have a risk that their explanations are inconsistent with the actual behavior of the model because they generate explanations by local approximation~\citep{Rudin:NMI2019,Alvarez-Melis:WHI2018}. 
In contrast to them, the local explanations provided by our method are faithful to the model since they consist of the weighted rules included in the model actually and they are provided without approximation.

\section{Conclusion}
In this paper, we proposed a new framework for learning rule ensembles, named locally interpretable rule ensemble (LIRE), that simultaneously achieves accuracy and interpretability. 
We introduced a new criterion of interpretability, named local interpretability, as the total number of rules that are necessary to explain individual predictions made by the model rather than to explain the model itself. 
Then, we proposed a regularizer that promotes the local interpretability of a rule ensemble, and developed an efficient learning algorithm with the regularizer by coordinate descent with local search. 
By experiments, we confirmed that our method learns more locally interpretable rule ensembles than the existing methods, such as RuleFit, while attaining comparable accuracy. 

\paragraph{Limitations and Future Work.}
There are several future directions to improve our LIRE. 
First, a theoretical analysis of the convergence property of \cref{algo:cdls} is essential to developing a more efficient one~\citep{Dedieu:JMLR2021}. 
We also need to analyze the sensitivity of the heyper-parameters $\lambda$ and $\gamma$ in more detail to decide their default values. 
Second, it is important to conduct user studies to evaluate our local interpretability in real applications~\citep{Doshi-Velez:arxiv2017,Lage:HCOMP2019}. 
Finally, extending our local interpretability to other rule models, such as decision trees, is interesting for future work~\citep{Zhang:DMKD2023}.

% \newpage
\section*{Acknowledgement}
We wish to thank Koji Maruhashi, Takuya Takagi, Ken Kobayashi, and Yuichi Ike for making a number of valuable suggestions.
We also thank the anonymous reviewers for their insightful comments.

\bibliographystyle{abbrvnat}
\begin{small}
    \bibliography{ref}

\begin{thebibliography}{45}
\providecommand{\natexlab}[1]{#1}
\providecommand{\url}[1]{\texttt{#1}}
\expandafter\ifx\csname urlstyle\endcsname\relax
  \providecommand{\doi}[1]{doi: #1}\else
  \providecommand{\doi}{doi: \begingroup \urlstyle{rm}\Url}\fi

\bibitem[Alvarez-Melis and Jaakkola(2018)]{Alvarez-Melis:WHI2018}
D.~Alvarez-Melis and T.~S. Jaakkola.
\newblock On the robustness of interpretability methods.
\newblock In \emph{Proceedings of the 2018 ICML Workshop on Human
  Interpretability in Machine Learning}, pages 66--71, 2018.

\bibitem[Angelino et~al.(2017)Angelino, Larus-Stone, Alabi, Seltzer, and
  Rudin]{Angelino:KDD2017}
E.~Angelino, N.~Larus-Stone, D.~Alabi, M.~Seltzer, and C.~Rudin.
\newblock Learning certifiably optimal rule lists.
\newblock In \emph{Proceedings of the 23rd ACM SIGKDD International Conference
  on Knowledge Discovery and Data Mining}, pages 35--44, 2017.

\bibitem[Angwin et~al.(2016)Angwin, Larson, Mattu, and Kirchner]{compas:2016}
J.~Angwin, J.~Larson, S.~Mattu, and L.~Kirchner.
\newblock {Machine Bias — ProPublica}.
\newblock URL:
  \url{https://www.propublica.org/article/machine-bias-risk-assessments-in-criminal-sentencing},
  2016.
\newblock Accessed: 2023-06-20.

\bibitem[B{\'e}nard et~al.(2021)B{\'e}nard, Biau, da~Veiga, and
  Scornet]{Benard:AISTATS2021}
C.~B{\'e}nard, G.~Biau, S.~da~Veiga, and E.~Scornet.
\newblock Interpretable random forests via rule extraction.
\newblock In \emph{Proceedings of the 24th International Conference on
  Artificial Intelligence and Statistics}, pages 937--945, 2021.

\bibitem[Breiman(2001)]{Breiman:ML2001}
L.~Breiman.
\newblock Random forests.
\newblock \emph{Machine Learning}, 45\penalty0 (1):\penalty0 5--32, 2001.

\bibitem[Caruana et~al.(2015)Caruana, Lou, Gehrke, Koch, Sturm, and
  Elhadad]{Caruana:KDD2015}
R.~Caruana, Y.~Lou, J.~Gehrke, P.~Koch, M.~Sturm, and N.~Elhadad.
\newblock Intelligible models for healthcare: Predicting pneumonia risk and
  hospital 30-day readmission.
\newblock In \emph{Proceedings of the 21th ACM SIGKDD International Conference
  on Knowledge Discovery and Data Mining}, pages 1721--1730, 2015.

\bibitem[Dash et~al.(2018)Dash, G\"{u}nl\"{u}k, and Wei]{Dash:NIPS2018}
S.~Dash, O.~G\"{u}nl\"{u}k, and D.~Wei.
\newblock Boolean decision rules via column generation.
\newblock In \emph{Proceedings of the 32nd International Conference on Neural
  Information Processing Systems}, pages 4660--4670, 2018.

\bibitem[Dedieu et~al.(2021)Dedieu, Hazimeh, and Mazumder]{Dedieu:JMLR2021}
A.~Dedieu, H.~Hazimeh, and R.~Mazumder.
\newblock Learning sparse classifiers: Continuous and mixed integer
  optimization perspectives.
\newblock \emph{Journal of Machine Learning Research}, 22\penalty0
  (135):\penalty0 1--47, 2021.

\bibitem[Doshi-Velez and Kim(2017)]{Doshi-Velez:arxiv2017}
F.~Doshi-Velez and B.~Kim.
\newblock Towards a rigorous science of interpretable machine learning.
\newblock \emph{arXiv, \rm{arXiv:1702.08608}}, 2017.

\bibitem[Eckstein et~al.(2017)Eckstein, Goldberg, and
  Kagawa]{Eckstein:ICML2017}
J.~Eckstein, N.~Goldberg, and A.~Kagawa.
\newblock Rule-enhanced penalized regression by column generation using
  rectangular maximum agreement.
\newblock In \emph{Proceedings of the 34th International Conference on Machine
  Learning}, pages 1059--1067, 2017.

\bibitem[{FICO} et~al.(2018){FICO}, {Google}, {Imperial College London}, {MIT},
  {University of Oxford}, {UC Irvine}, and {UC Berkeley}]{fico:2018}
{FICO}, {Google}, {Imperial College London}, {MIT}, {University of Oxford}, {UC
  Irvine}, and {UC Berkeley}.
\newblock {Explainable Machine Learning Challenge}.
\newblock URL:
  \url{https://community.fico.com/s/explainable-machine-learning-challenge},
  2018.
\newblock Accessed: 2023-06-20.

\bibitem[Freitas(2014)]{Freitas:EN2014}
A.~A. Freitas.
\newblock Comprehensible classification models: A position paper.
\newblock \emph{ACM SIGKDD Explorations Newsletter}, 15\penalty0 (1):\penalty0
  1--10, 2014.

\bibitem[Freund and Schapire(1997)]{Freund:JCSS1997}
Y.~Freund and R.~E. Schapire.
\newblock A decision-theoretic generalization of on-line learning and an
  application to boosting.
\newblock \emph{Journal of Computer and System Sciences}, 55\penalty0
  (1):\penalty0 119--139, 1997.

\bibitem[Friedman and Popescu(2003)]{Friedman:TR2003}
J.~Friedman and B.~E. Popescu.
\newblock Gradient directed regularization for linear regression and
  classification.
\newblock Technical report, Statistics Department, Stanford University, 2003.

\bibitem[Friedman and Popescu(2008)]{Friedman:AAS2008}
J.~H. Friedman and B.~E. Popescu.
\newblock {Predictive learning via rule ensembles}.
\newblock \emph{The Annals of Applied Statistics}, 2\penalty0 (3):\penalty0
  916--954, 2008.

\bibitem[Hu et~al.(2019)Hu, Rudin, and Seltzer]{Hu:NIPS2019}
X.~Hu, C.~Rudin, and M.~Seltzer.
\newblock Optimal sparse decision trees.
\newblock In \emph{Proceedings of the 33rd International Conference on Neural
  Information Processing Systems}, pages 7265--7273, 2019.

\bibitem[Jacovi and Goldberg(2020)]{Jacovi:ACL2020}
A.~Jacovi and Y.~Goldberg.
\newblock Towards faithfully interpretable {NLP} systems: How should we define
  and evaluate faithfulness?
\newblock In \emph{Proceedings of the 58th Annual Meeting of the Association
  for Computational Linguistics}, pages 4198--4205, 2020.

\bibitem[Kato et~al.(2023)Kato, Hanada, and Takeuchi]{Kato:PAMI2023}
H.~Kato, H.~Hanada, and I.~Takeuchi.
\newblock Safe rulefit: Learning optimal sparse rule model by meta safe
  screening.
\newblock \emph{IEEE Transactions on Pattern Analysis and Machine
  Intelligence}, 45\penalty0 (2):\penalty0 2330--2343, 2023.

\bibitem[Ke et~al.(2017)Ke, Meng, Finley, Wang, Chen, Ma, Ye, and
  Liu]{Ke:NIPS2017}
G.~Ke, Q.~Meng, T.~Finley, T.~Wang, W.~Chen, W.~Ma, Q.~Ye, and T.-Y. Liu.
\newblock {LightGBM}: A highly efficient gradient boosting decision tree.
\newblock In \emph{Proceedings of the 31st International Conference on Neural
  Information Processing Systems}, pages 3149--3157, 2017.

\bibitem[Kelly et~al.(2023)Kelly, Longjohn, and Nottingham]{Dua:2019}
M.~Kelly, R.~Longjohn, and K.~Nottingham.
\newblock The {UCI} machine learning repository.
\newblock URL: \url{http://archive.ics.uci.edu/}, 2023.
\newblock Accessed: 2023-06-20.

\bibitem[Lage et~al.(2019)Lage, Chen, He, Narayanan, Kim, Gershman, and
  Doshi-Velez]{Lage:HCOMP2019}
I.~Lage, E.~Chen, J.~He, M.~Narayanan, B.~Kim, S.~J. Gershman, and
  F.~Doshi-Velez.
\newblock Human evaluation of models built for interpretability.
\newblock In \emph{Proceedings of the 7th AAAI Conference on Human Computation
  and Crowdsourcing}, pages 59--67, 2019.

\bibitem[Lakkaraju et~al.(2016)Lakkaraju, Bach, and
  Leskovec]{Lakkaraju:KDD2016}
H.~Lakkaraju, S.~H. Bach, and J.~Leskovec.
\newblock Interpretable decision sets: A joint framework for description and
  prediction.
\newblock In \emph{Proceedings of the 22nd ACM SIGKDD International Conference
  on Knowledge Discovery and Data Mining}, pages 1675--1684, 2016.

\bibitem[Lipton(2018)]{Lipton:Queue2018}
Z.~C. Lipton.
\newblock The mythos of model interpretability: In machine learning, the
  concept of interpretability is both important and slippery.
\newblock \emph{Queue}, 16\penalty0 (3):\penalty0 31--57, 2018.

\bibitem[Liu et~al.(2022)Liu, Zhong, Seltzer, and Rudin]{Liu:AISTATS2022}
J.~Liu, C.~Zhong, M.~Seltzer, and C.~Rudin.
\newblock Fast sparse classification for generalized linear and additive
  models.
\newblock In \emph{Proceedings of the 25th International Conference on
  Artificial Intelligence and Statistics}, pages 9304--9333, 2022.

\bibitem[Lundberg and Lee(2017)]{Lundberg:NIPS2017}
S.~M. Lundberg and S.-I. Lee.
\newblock A unified approach to interpreting model predictions.
\newblock In \emph{Proceedings of the 31st International Conference on Neural
  Information Processing Systems}, pages 4765--4774, 2017.

\bibitem[Miller(2019)]{Miller:AI2019}
T.~Miller.
\newblock Explanation in artificial intelligence: Insights from the social
  sciences.
\newblock \emph{Artificial Intelligence}, 267:\penalty0 1--38, 2019.

\bibitem[Mohri et~al.(2012)Mohri, Rostamizadeh, and
  Talwalkar]{Mohri:2012:Foundations}
M.~Mohri, A.~Rostamizadeh, and A.~Talwalkar.
\newblock \emph{Foundations of Machine Learning}.
\newblock The MIT Press, 2012.

\bibitem[Nakagawa et~al.(2016)Nakagawa, Suzumura, Karasuyama, Tsuda, and
  Takeuchi]{Nakagawa:KDD2016}
K.~Nakagawa, S.~Suzumura, M.~Karasuyama, K.~Tsuda, and I.~Takeuchi.
\newblock Safe pattern pruning: An efficient approach for predictive pattern
  mining.
\newblock In \emph{Proceedings of the 22nd ACM SIGKDD International Conference
  on Knowledge Discovery and Data Mining}, pages 1785--1794, 2016.

\bibitem[Nalenz and Augustin(2022)]{Nalenz:AISTATS2022}
M.~Nalenz and T.~Augustin.
\newblock Compressed rule ensemble learning.
\newblock In \emph{Proceedings of the 25th International Conference on
  Artificial Intelligence and Statistics}, pages 9998--10014, 2022.

\bibitem[Plumb et~al.(2020)Plumb, Al-Shedivat, Cabrera, Perer, Xing, and
  Talwalkar]{Plumb:NIPS2020}
G.~Plumb, M.~Al-Shedivat, A.~A. Cabrera, A.~Perer, E.~Xing, and A.~Talwalkar.
\newblock Regularizing black-box models for improved interpretability.
\newblock In \emph{Proceedings of the 34th International Conference on Neural
  Information Processing Systems}, pages 10526--10536, 2020.

\bibitem[Ribeiro et~al.(2016)Ribeiro, Singh, and Guestrin]{Ribeiro:KDD2016}
M.~T. Ribeiro, S.~Singh, and C.~Guestrin.
\newblock “{Why Should I Trust You?}”: Explaining the predictions of any
  classifier.
\newblock In \emph{Proceedings of the 22nd ACM SIGKDD International Conference
  on Knowledge Discovery and Data Mining}, pages 1135--1144, 2016.

\bibitem[Ribeiro et~al.(2018)Ribeiro, Singh, and Guestrin]{Ribeiro:AAAI2018}
M.~T. Ribeiro, S.~Singh, and C.~Guestrin.
\newblock Anchors: High-precision model-agnostic explanations.
\newblock In \emph{Proceedings of the 32nd {AAAI} Conference on Artificial
  Intelligence}, pages 1527--1535, 2018.

\bibitem[Rieger et~al.(2020)Rieger, Singh, Murdoch, and Yu]{Rieger:ICML2020}
L.~Rieger, C.~Singh, W.~Murdoch, and B.~Yu.
\newblock Interpretations are useful: Penalizing explanations to align neural
  networks with prior knowledge.
\newblock In \emph{Proceedings of the 37th International Conference on Machine
  Learning}, pages 8116--8126, 2020.

\bibitem[Ross et~al.(2017)Ross, Hughes, and Doshi-Velez]{Ross:IJCAI2017}
A.~S. Ross, M.~C. Hughes, and F.~Doshi-Velez.
\newblock Right for the right reasons: Training differentiable models by
  constraining their explanations.
\newblock In \emph{Proceedings of the 26th International Joint Conference on
  Artificial Intelligence}, pages 2662--2670, 2017.

\bibitem[Rudin(2019)]{Rudin:NMI2019}
C.~Rudin.
\newblock Stop explaining black box machine learning models for high stakes
  decisions and use interpretable models instead.
\newblock \emph{Nature Machine Intelligence}, 1:\penalty0 206--215, 2019.

\bibitem[Rudin and Shaposhnik(2023)]{Rudin:JMLR2023}
C.~Rudin and Y.~Shaposhnik.
\newblock Globally-consistent rule-based summary-explanations for machine
  learning models: Application to credit-risk evaluation.
\newblock \emph{Journal of Machine Learning Research}, 24\penalty0
  (16):\penalty0 1--44, 2023.

\bibitem[Rudin et~al.(2022)Rudin, Chen, Chen, Huang, Semenova, and
  Zhong]{Rudin:SS2022}
C.~Rudin, C.~Chen, Z.~Chen, H.~Huang, L.~Semenova, and C.~Zhong.
\newblock {Interpretable machine learning: Fundamental principles and 10 grand
  challenges}.
\newblock \emph{Statistics Surveys}, 16:\penalty0 1--85, 2022.

\bibitem[Tibshirani(1994)]{Tibshirani:JRSS1994}
R.~Tibshirani.
\newblock {Regression Shrinkage and Selection Via the Lasso}.
\newblock \emph{Journal of the Royal Statistical Society: Series B (Statistical
  Methodology)}, 58:\penalty0 267--288, 1994.

\bibitem[Ustun and Rudin(2019)]{Ustun:JMLR2019}
B.~Ustun and C.~Rudin.
\newblock Learning optimized risk scores.
\newblock \emph{Journal of Machine Learning Research}, 20\penalty0
  (150):\penalty0 1--75, 2019.

\bibitem[Wang and Rudin(2015)]{Wang:AISTATS2015}
F.~Wang and C.~Rudin.
\newblock {Falling Rule Lists}.
\newblock In \emph{Proceedings of the 18th International Conference on
  Artificial Intelligence and Statistics}, pages 1013--1022, 2015.

\bibitem[Wei et~al.(2019)Wei, Dash, Gao, and Gunluk]{Wei:ICML2019}
D.~Wei, S.~Dash, T.~Gao, and O.~Gunluk.
\newblock Generalized linear rule models.
\newblock In \emph{Proceedings of the 36th International Conference on Machine
  Learning}, pages 6687--6696, 2019.

\bibitem[Yang et~al.(2017)Yang, Rudin, and Seltzer]{Yang:ICML2017}
H.~Yang, C.~Rudin, and M.~Seltzer.
\newblock {Scalable Bayesian Rule Lists}.
\newblock In \emph{Proceedings of the 34th International Conference on Machine
  Learning}, pages 3921--3930, 2017.

\bibitem[Yang et~al.(2022)Yang, Lindenbaum, and Kluger]{Yang:ICML2022}
J.~Yang, O.~Lindenbaum, and Y.~Kluger.
\newblock Locally sparse neural networks for tabular biomedical data.
\newblock In \emph{Proceedings of the 39th International Conference on Machine
  Learning}, pages 25123--25153, 2022.

\bibitem[Yoon et~al.(2022)Yoon, Arik, and Pfister]{Yoon:TMLR2022}
J.~Yoon, S.~O. Arik, and T.~Pfister.
\newblock {LIMIS}: Locally interpretable modeling using instance-wise
  subsampling.
\newblock \emph{Transactions on Machine Learning Research}, 2022.
\newblock ISSN 2835-8856.
\newblock URL \url{https://openreview.net/forum?id=S8eABAy8P3}.

\bibitem[Zhang and Gionis(2023)]{Zhang:DMKD2023}
G.~Zhang and A.~Gionis.
\newblock Regularized impurity reduction: accurate decision trees with
  complexity guarantees.
\newblock \emph{Data Mining and Knowledge Discovery}, 37\penalty0 (1):\penalty0
  434--475, 2023.

\end{thebibliography}
\end{small}

\newpage
\appendix
\section{Proof of Theorem~1}
% \begin{theorem}\label{theo:analytical}
% For a weight vector $\bm{\alpha} \in \mathbb{R}^M$ and a rule $m \in [M]$ with $\alpha_m = 0$, we consider the coordinate update problem that is formulated as follows:
% \begin{align*}
%     \alpha^\ast_{m} = \argmin_{\alpha'_{m} \in \mathbb{R}} G_{\gamma, \lambda}(\bm{\alpha} + \alpha'_{m} \cdot \bm{e}_{m} \mid S),
% \end{align*}
% where $\bm{e}_m = (e_{m, 1}, \dots, e_{m, M}) \in \set{0,1}^M$ is a vector with $e_{m ,m} = 1$ and $e_{m, m'} = 0$ for all $m' \in [M] \setminus \set{m}$. 
% If the loss function $l$ in the objective function $G_{\gamma, \lambda}$ is the exponential loss $l(y, f_{\bm{\alpha}}(\bm{x})) = e^{-y \cdot f_{\bm{\alpha}}(\bm{x})}$, then we have
% \begin{align*}
%     \alpha^\ast_m =
%     \begin{cases}
%         0 & \text{if } \epsilon_m^- \in [\frac{1}{2}-B_m, \frac{1}{2}+B_m], \\
%         \frac{1}{2} \ln\frac{1-\epsilon_m^-}{\epsilon_m^-} & \text{otherwise},
%     \end{cases}
% \end{align*}
% where 
% $\epsilon_m^- = \frac{\frac{1}{N} \sum_{n \in [N]: y_n \cdot r_m(\bm{x}_n) = -1} l(y_n, f_{\bm{\alpha}}(\bm{x}_n))}{\epsilon_m}$, 
% $\epsilon_m = \frac{1}{N} \sum_{n \in [N]: r_m(\bm{x}_n) = 1} l(y_n, f_{\bm{\alpha}}(\bm{x}_n))$, 
% $B_m = \frac{\sqrt{C_m \cdot (2 \cdot \epsilon_m - C_m)}}{2 \cdot \epsilon_m}$, 
% $C_m = \gamma + \lambda \cdot \frac{p_m - \Omega_\mathrm{L}(\bm{\alpha} \mid S)}{1 + |\supp{\bm{\alpha}}|}$, 
% and $p_m = \frac{1}{N}\sum_{n=1}^{N} r_m(\bm{x}_n)$. 
% \end{theorem}
% 
To prove \cref{theo:analytical}, we show the following lemma. 
\begin{lemma}\label{lemm:increase}
Let $\bm{\alpha} \in \mathbb{R}^M$ be a weight vector with $\alpha_m = 0$ for a rule $m \in [M]$. 
We write $\Omega(\bm{\alpha}) \coloneqq \gamma \cdot \Omega_\mathrm{G}(\bm{\alpha}) + \lambda \cdot \Omega_\mathrm{L}(\bm{\alpha} \mid S)$. 
Then, for any $\alpha'_m \not= 0$, we have
\begin{align*}
    \Omega(\bm{\alpha} + \alpha'_{m} \cdot \bm{e}_{m}) - \Omega(\bm{\alpha})
    = \gamma + \lambda \cdot \frac{p_m - \Omega_\mathrm{L}(\bm{\alpha} \mid S)}{1 + |\supp{\bm{\alpha}}|},
\end{align*}
where $\bm{e}_m = (e_{m, 1}, \dots, e_{m, M}) \in \set{0,1}^M$ is a vector with $e_{m ,m} = 1$ and $e_{m, m'} = 0$ for all $m' \in [M] \setminus \set{m}$, and $p_m = \frac{1}{N}\sum_{n=1}^{N} r_m(\bm{x}_n)$. 
\end{lemma}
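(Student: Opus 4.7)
The plan is to unfold the definitions of $\Omega_\mathrm{G}$ and $\Omega_\mathrm{L}$ at the perturbed weight $\bm{\alpha}+\alpha'_m\bm{e}_m$, track exactly how the support and each local support change, and then subtract and simplify. The key observation driving everything is that $\alpha_m = 0$ in the base vector, so switching $\alpha_m$ to a nonzero value $\alpha'_m$ adds index $m$ to the support (and, per example, to each local support precisely when $r_m(\bm{x}_n)=1$), while all other coordinates are untouched.

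Concretely, I would proceed in three short steps. \textbf{Step 1 (global part).} Since $\alpha_m = 0$ and $\alpha'_m \neq 0$, we have $\supp{\bm{\alpha}+\alpha'_m\bm{e}_m} = \supp{\bm{\alpha}} \cup \set{m}$ with $m \notin \supp{\bm{\alpha}}$, hence $\Omega_\mathrm{G}(\bm{\alpha}+\alpha'_m\bm{e}_m) - \Omega_\mathrm{G}(\bm{\alpha}) = 1$. \textbf{Step 2 (local part, per example).} For each $n \in [N]$, by the same argument $\lsupp{\bm{\alpha}+\alpha'_m\bm{e}_m \mid \bm{x}_n} = \lsupp{\bm{\alpha} \mid \bm{x}_n} \cup \set{m}$ if $r_m(\bm{x}_n)=1$ and is unchanged otherwise, which gives the compact identity $|\lsupp{\bm{\alpha}+\alpha'_m\bm{e}_m \mid \bm{x}_n}| = |\lsupp{\bm{\alpha} \mid \bm{x}_n}| + r_m(\bm{x}_n)$. \textbf{Step 3 (assemble $\Omega_\mathrm{L}$).} Plug this into the definition of $\Omega_\mathrm{L}$, noting that the common denominator becomes $|\supp{\bm{\alpha}}|+1$, then split the numerator into the part depending on $\bm{\alpha}$ (which recombines into $|\supp{\bm{\alpha}}| \cdot \Omega_\mathrm{L}(\bm{\alpha} \mid S)$) and the part depending on $r_m$ (which yields $p_m$).

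Taking the difference, one term in the numerator telescopes after putting $\Omega_\mathrm{L}(\bm{\alpha}\mid S)$ over the common denominator $|\supp{\bm{\alpha}}|+1$: the contribution $\frac{|\supp{\bm{\alpha}}|\cdot \Omega_\mathrm{L}(\bm{\alpha}\mid S)}{|\supp{\bm{\alpha}}|+1} - \Omega_\mathrm{L}(\bm{\alpha}\mid S)$ collapses to $-\frac{\Omega_\mathrm{L}(\bm{\alpha}\mid S)}{|\supp{\bm{\alpha}}|+1}$, and combining with the $p_m/(|\supp{\bm{\alpha}}|+1)$ term yields exactly $\frac{p_m - \Omega_\mathrm{L}(\bm{\alpha}\mid S)}{|\supp{\bm{\alpha}}|+1}$. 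Multiplying by $\lambda$ and adding the $\gamma$ from Step 1 gives the claimed formula.

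There is no real obstacle; the proof is an accounting exercise in the definitions. The only place where one has to be careful is the denominator change from $|\supp{\bm{\alpha}}|$ to $|\supp{\bm{\alpha}}|+1$ in $\Omega_\mathrm{L}$, because a naive subtraction that keeps the two denominators separate obscures the cancellation. Writing the combined expression over the single denominator $|\supp{\bm{\alpha}}|+1$ before simplifying makes the telescoping transparent and is the cleanest way to obtain the stated identity.
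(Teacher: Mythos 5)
Your proposal is correct and follows essentially the same route as the paper's proof: the same observation that adding $m$ to the support increments each local support by $r_m(\bm{x}_n)$, followed by the same algebraic cancellation over the common denominator $|\supp{\bm{\alpha}}|+1$. The only cosmetic difference is that you sum over $n$ before telescoping, whereas the paper combines the two fractions per example over the denominator $(1+K)\cdot K$ first; the computations are identical.
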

\begin{proof}
For the global interpretability regularizer $\Omega_\mathrm{G}$, we have $\Omega_\mathrm{G}(\bm{\alpha} + \alpha'_{m} \cdot \bm{e}_{m}) - \Omega_\mathrm{G}(\bm{\alpha}) = |\supp{\bm{\alpha} + \alpha'_{m} \cdot \bm{e}_{m}}|- |\supp{\bm{\alpha}}| = 1$. 
Let $K \coloneqq |\supp{\bm{\alpha}}|$ be the support size of $\alpha$. 
For the local interpretability regularizer $\Omega_\mathrm{L}$, we have
\begin{align*}
    \Omega_\mathrm{L}(\bm{\alpha} + \alpha'_{m} \cdot \bm{e}_{m} \mid S) - \Omega_\mathrm{L}(\bm{\alpha} \mid S) &= \frac{1}{N} \sum_{n=1}^{N} \frac{|\lsupp{\bm{\alpha} + \alpha'_{m} \cdot \bm{e}_{m} \mid \bm{x}_n}|}{|\supp{\bm{\alpha} + \alpha'_{m} \cdot \bm{e}_{m}}|} - \frac{1}{N} \sum_{n=1}^{N} \frac{|\lsupp{\bm{\alpha} \mid \bm{x}_n}|}{|\supp{\bm{\alpha}}|} \\
    &= \frac{1}{N} \sum_{n=1}^{N} \left( \frac{|\lsupp{\bm{\alpha} + \alpha'_{m} \cdot \bm{e}_{m} \mid \bm{x}_n}|}{1 + K} - \frac{|\lsupp{\bm{\alpha} \mid \bm{x}_n}|}{K} \right) \\
    &= \frac{1}{N} \sum_{n=1}^{N} \frac{(|\lsupp{\bm{\alpha} \mid \bm{x}_n}| + r_m(\bm{x}_n)) \cdot K - |\lsupp{\bm{\alpha} \mid \bm{x}_n}| \cdot (1+K)}{(1 + K) \cdot K} \\
    &= \frac{1}{N} \sum_{n=1}^{N} \frac{r_m(\bm{x}_n) \cdot K - |\lsupp{\bm{\alpha} \mid \bm{x}_n}|}{(1 + K) \cdot K} \\
    &= \frac{\frac{1}{N} \sum_{n=1}^{N} r_m(\bm{x}_n) - \frac{1}{N} \sum_{n=1}^{N} \frac{|\lsupp{\bm{\alpha} \mid \bm{x}_n}|}{K}}{1 + K} = \frac{p_m - \Omega_\mathrm{L}(\bm{\alpha} \mid S)}{1 + |\supp{\bm{\alpha}}|}. 
\end{align*}
By combining these results, we obtain $\Omega(\bm{\alpha} + \alpha'_{m} \cdot \bm{e}_{m}) - \Omega(\bm{\alpha}) = \gamma + \lambda \cdot \frac{p_m - \Omega_\mathrm{L}(\bm{\alpha} \mid S)}{1 + |\supp{\bm{\alpha}}|}$. 
% \qed
\end{proof}

Using \cref{lemm:increase}, we give a proof of \cref{theo:analytical} as follows. 
\begin{proof}[\cref{theo:analytical}]
Recall that our learning objective function $G_{\gamma, \lambda}$ is defined as $G_{\gamma, \lambda}(\bm{\alpha} \mid S) = L(\bm{\alpha} \mid S) + \gamma \cdot \Omega_\mathrm{G}(\bm{\alpha}) + \lambda \cdot \Omega_\mathrm{L}(\bm{\alpha} \mid S)$. 
From \cref{lemm:increase}, when we set $\alpha_m$ to a non-zero value, the value of the regularizers in $G_{\gamma, \lambda}$ increases by $C_m \coloneqq \gamma + \lambda \cdot \frac{p_m - \Omega_\mathrm{L}(\bm{\alpha} \mid S)}{1 + |\supp{\bm{\alpha}}|}$. 
Hence, if the maximum possible decrease in the empirical risk $L(\bm{\alpha} + \alpha'_{m} \cdot \bm{e}_{m} \mid S)$ by setting $\alpha'_m \not= 0$ is less than $C_m$, then we have $\alpha^\ast_m = 0$. 
In the following, we show the condition under which this occurs. 
% In the following, we show that we can obtain (1) an analytical solution $\alpha'_{m}$ that minimizes the empirical risk and (2) the condition where we can set $\alpha^\ast_m = 0$ by comparing the maximum possible decrease in the empirical risk by setting $\alpha^\ast_m = \alpha'_{m}$ and $C_m$. 

First, we consider the optimal value $\alpha'_m$ that minimizes the empirical risk $L(\bm{\alpha} + \alpha'_{m} \cdot \bm{e}_{m} \mid S)$. 
As with AdaBoost, we can obtain the analytical solution for $\alpha'_{m}$ by solving $\frac{\partial}{\partial \alpha_m} L(\bm{\alpha} \mid S) = 0$. 
For $m$ and $n \in [N]$, we write $z_{nm} = y_n \cdot r_m(\bm{x}_n)$. 
Since the loss function is the exponential loss $l(y, f_{\bm{\alpha}}(\bm{x})) = e^{-y \cdot f_{\bm{\alpha}}(\bm{x})}$, we have
\begin{align*}
    \frac{\partial}{\partial \alpha_m} L(\bm{\alpha} \mid S) 
    &= \frac{1}{N} \sum_{n=1}^{N} - z_{nm} \cdot e^{-y_n \cdot f_{\bm{\alpha}}(\bm{x})} \cdot e^{- z_{nm} \cdot \alpha_m} \\
    &= \frac{1}{N} \sum_{n \in [N]: z_{nm} \not= 0} - z_{nm} \cdot e^{-y_n \cdot f_{\bm{\alpha}}(\bm{x})} \cdot e^{- z_{nm} \cdot \alpha_m} \\
    &= \frac{1}{N} \left( \sum_{n \in [N]: z_{nm} = +1} - e^{-y_n \cdot f_{\bm{\alpha}}(\bm{x})} \cdot e^{- \alpha_m} + \sum_{n \in [N]: z_{nm} = -1} e^{-y_n \cdot f_{\bm{\alpha}}(\bm{x})} \cdot e^{\alpha_m} \right). \\
\end{align*}
Let $\epsilon_m \coloneqq \frac{1}{N} \sum_{n \in [N]: z_{nm} \not= 0} e^{-y_n \cdot f_{\bm{\alpha}}(\bm{x})}$ be the normalizing constant, and we define
\begin{align*}
    \epsilon_m^+ \coloneqq \frac{\frac{1}{N} \sum_{n \in [N]: z_{nm} = +1} e^{-y_n \cdot f_{\bm{\alpha}}(\bm{x})}}{\epsilon_m} 
    \;\;\text{and}\;\;
    \epsilon_m^- \coloneqq \frac{\frac{1}{N} \sum_{n \in [N]: z_{nm} = -1} e^{-y_n \cdot f_{\bm{\alpha}}(\bm{x})}}{\epsilon_m}. 
\end{align*}
By definition, $\epsilon_m^+ + \epsilon_m^- = 1$ holds. 
Then, we obtain the analytical solution $\alpha'_m$ to $\frac{\partial}{\partial \alpha_m} L(\bm{\alpha} \mid S) = 0$ by
\begin{align*}
    \frac{\partial}{\partial \alpha_m} L(\bm{\alpha} \mid S) = 0
    &\iff - \epsilon_m^+ \cdot e^{- \alpha'_m} + \epsilon_m^- \cdot e^{\alpha'_m} = 0 \\
    &\iff \alpha'_m = \frac{1}{2} \ln \frac{\epsilon_m^+}{\epsilon_m^-} = \frac{1}{2} \ln \frac{1 - \epsilon_m^-}{\epsilon_m^-}. 
\end{align*}

Using $\alpha'_m$, we can compute the maximum possible decrease in the empirical risk as follows:
\begin{align*}
    &L(\bm{\alpha} \mid S) - L(\bm{\alpha} + \alpha'_{m} \cdot \bm{e}_{m} \mid S) \\
    &= \frac{1}{N} \sum_{n \in [N]: z_{nm} \not= 0} e^{-y_n \cdot f_{\bm{\alpha}}(\bm{x})} \cdot (1 - e^{- z_{nm} \cdot \alpha'_m}) \\
    &= \frac{1}{N} \left( \sum_{n \in [N]: z_{nm} = +1} e^{-y_n \cdot f_{\bm{\alpha}}(\bm{x})} \cdot \left(1 - \sqrt{\frac{\epsilon_m^-}{\epsilon_m^+}}\right) + \sum_{n \in [N]: z_{nm} = -1}e^{-y_n \cdot f_{\bm{\alpha}}(\bm{x})} \cdot \left(1 - \sqrt{\frac{\epsilon_m^+}{\epsilon_m^-}}\right) \right) \\
    &= \epsilon_m \cdot \left( \epsilon_m^+ \cdot \left(1 - \sqrt{\frac{\epsilon_m^-}{\epsilon_m^+}}\right) + \epsilon_m^- \cdot \left(1 - \sqrt{\frac{\epsilon_m^+}{\epsilon_m^-}}\right) \right) \\
    &= \epsilon_m \cdot \left( 1 - 2 \cdot \sqrt{\epsilon_m^- \cdot (1 - \epsilon_m^-)} \right).
\end{align*}
Now, we derive the condition of $\alpha'_m = 0$ by comparing $C_m$ and $L(\bm{\alpha} \mid S) - L(\bm{\alpha} + \alpha'_{m} \cdot \bm{e}_{m} \mid S)$ as follows: 
\begin{align*}
    L(\bm{\alpha} \mid S) - L(\bm{\alpha} + \alpha'_{m} \cdot \bm{e}_{m} \mid S) \leq C_m
    &\iff \epsilon_m \cdot \left( 1 - 2 \cdot \sqrt{\epsilon_m^- \cdot (1 - \epsilon_m^-)} \right) \leq C_m \\
    &\iff (\epsilon_m^-)^2 - \epsilon_m^- + \left( \frac{\epsilon_m- C_m}{2 \cdot \epsilon_m} \right) \leq 0 \\
    &\iff \epsilon_m^- \in \left[ \frac{1}{2} - B_m, \frac{1}{2} + B_m \right],
\end{align*}
where $B_m \coloneqq \frac{\sqrt{C_m \cdot (2 \cdot \epsilon_m - C_m)}}{2 \cdot \epsilon_m}$. 
Therefore, if $\epsilon_m^- \in [ \frac{1}{2} - B_m, \frac{1}{2} + B_m ]$, we have $\alpha^\ast_m = 0$; otherwise, we have $\alpha^\ast_m = \frac{1}{2} \ln \frac{1 - \epsilon_m^-}{\epsilon_m^-}$, which concludes the proof.  
% \qed
\end{proof}

\section{Complete Experimental Settings and Results}
In this section, we present the complete settings and results of our experiments. 
\cref{tab:appendix:exp:datasets} shows the sample size $N$, the total number of features $D$, and the average number of rules $M$ used for learning RuleFit and LIRE. 
In each dataset, all the categorical features were transformed into binary vectors through one-hot encoding in advance. 
Note that the settings of the hyper-parameters used to learn the baseline methods and our method are summarized in \cref{tab:appendix:exp:parameters}. 
For our method, we set the maximum number of iterations in Algorithm~1 by $I = 5000$. 

\subsection{Accuracy-Interpretability Trade-Off}
We present the complete results of the trade-off analyses between the accuracy and interpretability of our LIRE compared to RuleFit. 
In addition to accuracy, we measured the F1 score and area under the ROC curve (AUC) on each test set of $10$-fold cross-validation. 
\cref{fig:appendx:exp:tradeoff1,fig:appendx:exp:tradeoff2,fig:appendx:exp:tradeoff3} show the results with $\lambda = 0.1$, $0.5$, and $1.0$, respectively. 

\subsection{Effect of Local Interpretability Regularizer}
We present the complete results on the effects of our local interpretability regularizer on rule ensembles by varying the trade-off parameter $\lambda$. 
\cref{fig:appendix:exp:sensitivity} shows the results on the average test accuracy, average F1 score, AUC, support size, local support size, and support ratio on each test set. 

\subsection{Performance Comparison}
We present the complete results of the performance comparison with the existing methods. 
In each fold of $10$-fold cross-validation, we tuned the hyper-parameters of each method by hold-out validation.
\cref{tab:appendix:exp:parameters} shows the value ranges of the hyper-parameters of each method. 

\cref{tab:appendix:exp:time} presents the average running time of each method. 
\cref{fig:appendix:exp:accuracy,fig:appendix:exp:f1,fig:appendix:exp:auc} show the results on the accuracy, F1 score, and AUC of each method in $10$-fold cross-validation, respectively. 
\cref{tab:appendix:exp:test:acc,tab:appendix:exp:test:sup} present the p-values of the Wilcoxon signed-rank test between LIRE and the baselines. 

% \newpage
\section*{Ethical Statement}
% Ethics is one of the most important topics to emerge in machine learning and data mining. We ask you to think about the ethical implications of your submission such as, e.g., related to the collection and processing of personal data, the inference of personal information, or the potential use of your work for policing or the military. which will be taken into consideration by the reviewers. As part of your submission, you are asked to include an ethical statement up to one page in length that discusses any ethical implications of your work.

\subsection*{Existing Assets}
All datasets used in \cref{sec:experiments} are publicly available and do not contain any identifiable information or offensive content. 
As they are accompanied by appropriate citations in the main body, see the corresponding references for more details.
Scikit-learn 1.0.2 \footnote{\url{https://scikit-learn.org/stable/}} is publicly available under the BSD-3-Clause license. 
All the scripts and datasets used in our experiments are available in our GitHub repository at \url{https://github.com/kelicht/lire}. 

\subsection*{Potential Impacts}
\paragraph{Positive Impacts.}
Our proposed method, named locally interpretable rule ensemble (LIRE), is a new framework for learning rule ensemble models. 
Our LIRE can explain its individual predictions with a few weighted logical rules, i.e., in a transparent manner for human users. 
Therefore, our LIRE helps decision-makers to validate the prediction results made by machine learning models and ensure the transparency of their decision results in critical tasks, such as loan approvals, medical diagnoses, and judicial decisions~\citep{Rudin:NMI2019,Rudin:SS2022}. 

\paragraph{Negative Impacts.}
Because our method provides explanations of its individual predictions in a transparent manner, one might use the output to extract sensitive information from the training dataset. 
Note, however, that such unintended use can occur not only with our method but also with other interpretable models. 
One possible way to mitigate this risk is to check whether the features in a dataset used to construct the weighted rules might reveal sensitive information before training the model on the dataset and deploying it publicly. 

\subsection*{Limitations}
In the real applications of our LIRE, there might exist three limitations. 
First, as mentioned in our experiments, the computational time of LIRE was certainly longer than those of the baselines. 
To overcome this limitation, we need to analyze the convergence property of our learning algorithm and develop a more efficient one. 
Second, since LIRE has two hyper-parameters, $\lambda$ and $\gamma$, users need to determine these values depending on the dataset by themselves, which may incur additional computational costs. 
Finally, the effectiveness of our local interpretability in real situations has not yet been verified. 
We plan to conduct user studies to evaluate the usefulness of our concept for human users and analyze how much local support size is acceptable for humans~\citep{Lage:HCOMP2019,Doshi-Velez:arxiv2017}.

\begin{table}[p]
    \centering
    \caption{Details of the datasets used in the experiments. }
    \begin{tabular}{
        lccc
        % p{0.15\textwidth}>{\centering}        
        % p{0.2\textwidth}>{\centering}
        % p{0.2\textwidth}>{\centering\arraybackslash}
        % p{0.275\textwidth}
    }
    \toprule
        \textbf{Dataset} & \textbf{\#Examples} $N$ & \textbf{\#Features} $D$ & \textbf{Average \#Rules} $M$ \\
    \midrule
        Adult & $32561$ & $108$ & $1220.3$ \\
        Bank & $4521$ & $35$ & $1225.8$ \\
        Heart & $1025$ & $15$ & $1053.4$ \\
        FICO & $9871$ & $23$ & $1249.8$ \\
        COMPAS & $6167$ & $14$ & $1092.9$ \\
    \bottomrule
    \end{tabular}
    \label{tab:appendix:exp:datasets}
\end{table}

\begin{table}[p]
    \centering
    \caption{
        Details of hyper-parameter tuning for each method. 
        For GLRM, following its original paper, we set $\lambda_0$ and $\lambda_1$ so that $\lambda_1 / \lambda_0 = 0.2$. 
        For RF and LightGBM, $T$ and $L$ denote the maximum numbers of decision trees and leaves in each tree, respectively. 
        For KernelSVM, we used the Gaussian kernel with a regularization parameter $C$. 
        In each hold of CV, we selected the hyper-parameters of each method by hold-out validation with AUC score. 
    }
    \begin{tabular}{lc}
    \toprule
        \textbf{Method} & \textbf{Range of hyper-parameters}  \\
    \midrule
        LIRE & $\gamma \in \set{0.001, 0.002, 0.003, 0.004, 0.005}$, $\lambda = 2 \cdot \gamma$ \\
        RuleFit & $\gamma \in \set{0.0005, 0.00075, 0.001, 0.0025, 0.005, 0.0075, 0.01, 0.025, 0.05}$. \\
        GLRM & $\lambda_0 \in \set{0.00005, 0.0001, 0.0005, 0.001, 0.005, 0.01, 0.05, 0.1}$, $\lambda_1 = 0.2 \cdot \lambda_0$ \\
        RF & $T \in \set{100, 300, 500}$, $L \in \set{64, 128, 256}$ \\
        LightGBM & $T \in \set{100, 300, 500}$, $L \in \set{64, 128, 256}$ \\
        KernelSVM & $C \in \set{ 0.01, 0.05, 0.1, 0.5, 1.0, 5.0, 10.0, 50.0, 100.0}$ \\
    \bottomrule
    \end{tabular}
    \label{tab:appendix:exp:parameters}
\end{table}
\begin{table}[p]
    \centering
    % \scriptsize
    \caption{Experimental results on the average computational time of each method in $10$-fold cross-validation [s]. }
    \begin{tabular}{
        lcccccc
        % p{0.125\textwidth}>{\centering}        
        % p{0.13\textwidth}>{\centering}
        % p{0.13\textwidth}>{\centering}
        % p{0.13\textwidth}>{\centering}
        % p{0.13\textwidth}>{\centering}
        % p{0.13\textwidth}>{\centering\arraybackslash}
        % p{0.13\textwidth}
    }
        \toprule
        \multirow{2}{*}{\textbf{Dataset}} & \multicolumn{3}{c}{\textbf{Rule Ensemble}} & \multicolumn{3}{c}{\textbf{Complex Model}} \\
        \cmidrule(lr){2-4} \cmidrule(lr){5-7}
        & {LIRE (ours)} & {RuleFit} & {GLRM} & {RF} & {LightGBM} & {KernelSVM} \\
        \midrule
        Adult & $148.5 \pm 0.01$ & $9.658 \pm 0.01$ & $75.93 \pm 0.0$ & $5.137 \pm 0.01$ & $0.192 \pm 0.0$ & $401.7 \pm 0.01$ \\
        Bank & $8.446 \pm 0.01$ & $1.308 \pm 0.01$ & $11.78 \pm 0.01$ & $0.825 \pm 0.01$ & $0.252 \pm 0.01$ & $2.017 \pm 0.0$ \\
        Heart & $4.859 \pm 0.01$ & $0.782 \pm 0.02$ & $32.37 \pm 0.02$ & $0.095 \pm 0.01$ & $0.067 \pm 0.01$ & $0.106 \pm 0.05$ \\
        FICO & $74.59 \pm 0.02$ & $1.93 \pm 0.02$ & $14.17 \pm 0.02$ & $2.142 \pm 0.02$ & $0.284 \pm 0.02$ & $10.99 \pm 0.02$ \\
        COMPAS & $9.875 \pm 0.02$ & $1.191 \pm 0.02$ & $2.528 \pm 0.02$ & $0.492 \pm 0.02$ & $0.084 \pm 0.02$ & $4.47 \pm 0.02$ \\
        \midrule
        \textbf{Average} & $49.36$ & $2.974$ & $27.36$ & $1.744$ & $0.176$ & $83.86$ \\
        \bottomrule
    \end{tabular}
    \label{tab:appendix:exp:time}
\end{table}

\newpage

\begin{figure}[p]
    \centering
    \includegraphics[width=0.95\linewidth]{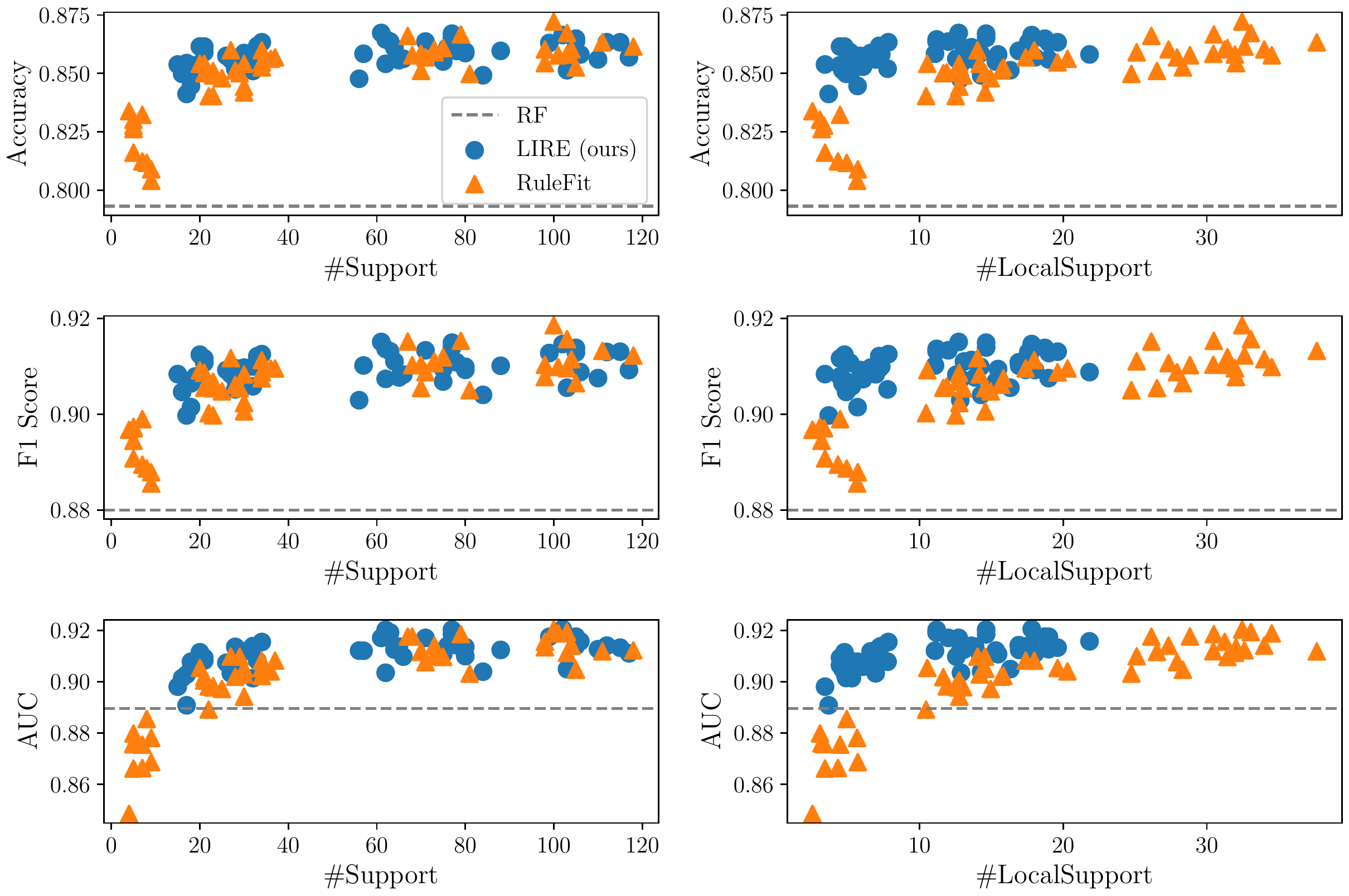}
    \caption{Experimental results on the accuracy-interpretability trade-off with $\lambda = 0.1$. }
    \label{fig:appendx:exp:tradeoff1}
\end{figure}
\begin{figure}[p]
    \centering
    \includegraphics[width=0.95\linewidth]{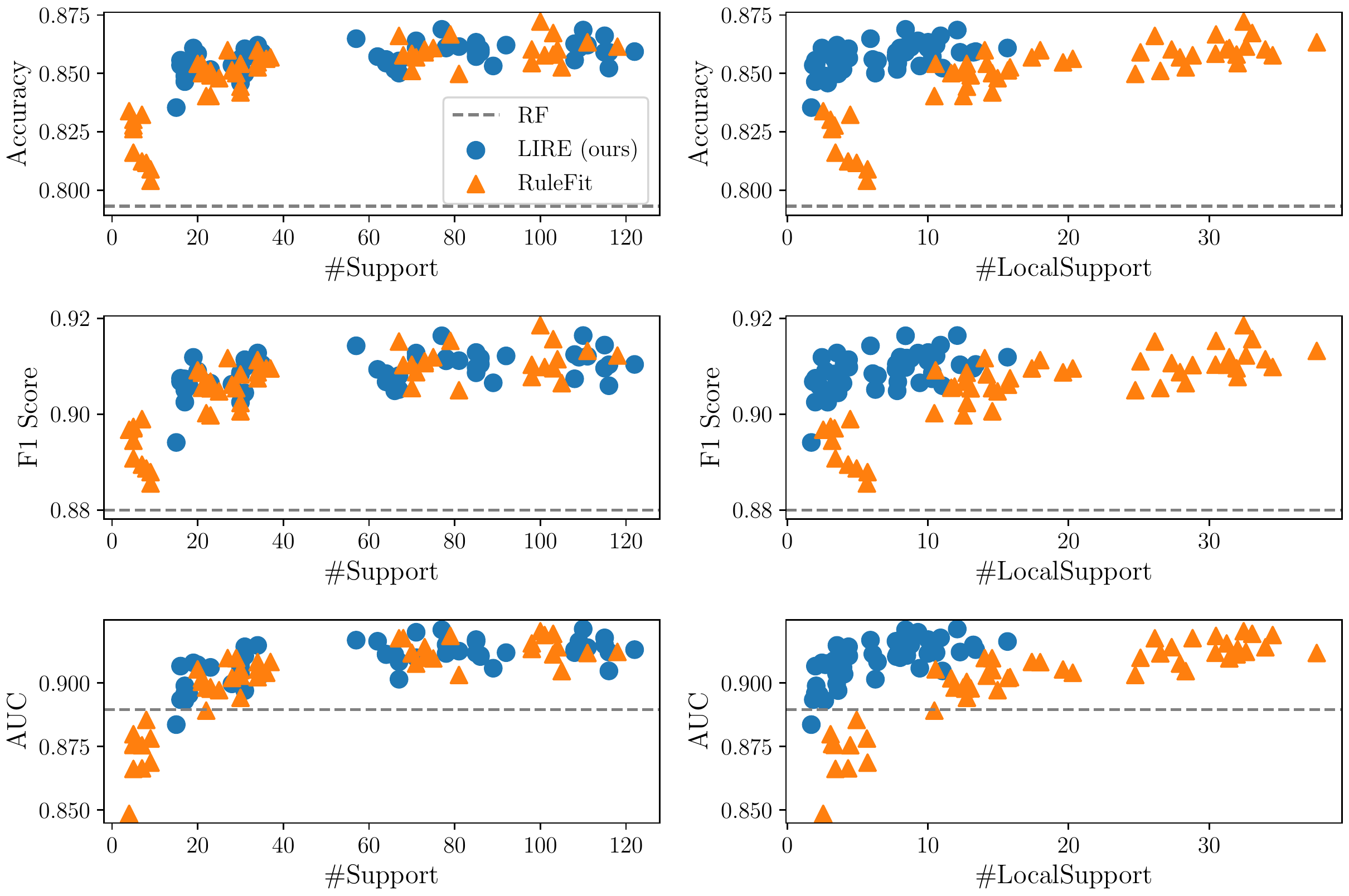}
    \caption{Experimental results on the accuracy-interpretability trade-off with $\lambda = 0.5$. }
    \label{fig:appendx:exp:tradeoff2}
\end{figure}
\begin{figure}[p]
    \centering
    \includegraphics[width=0.95\linewidth]{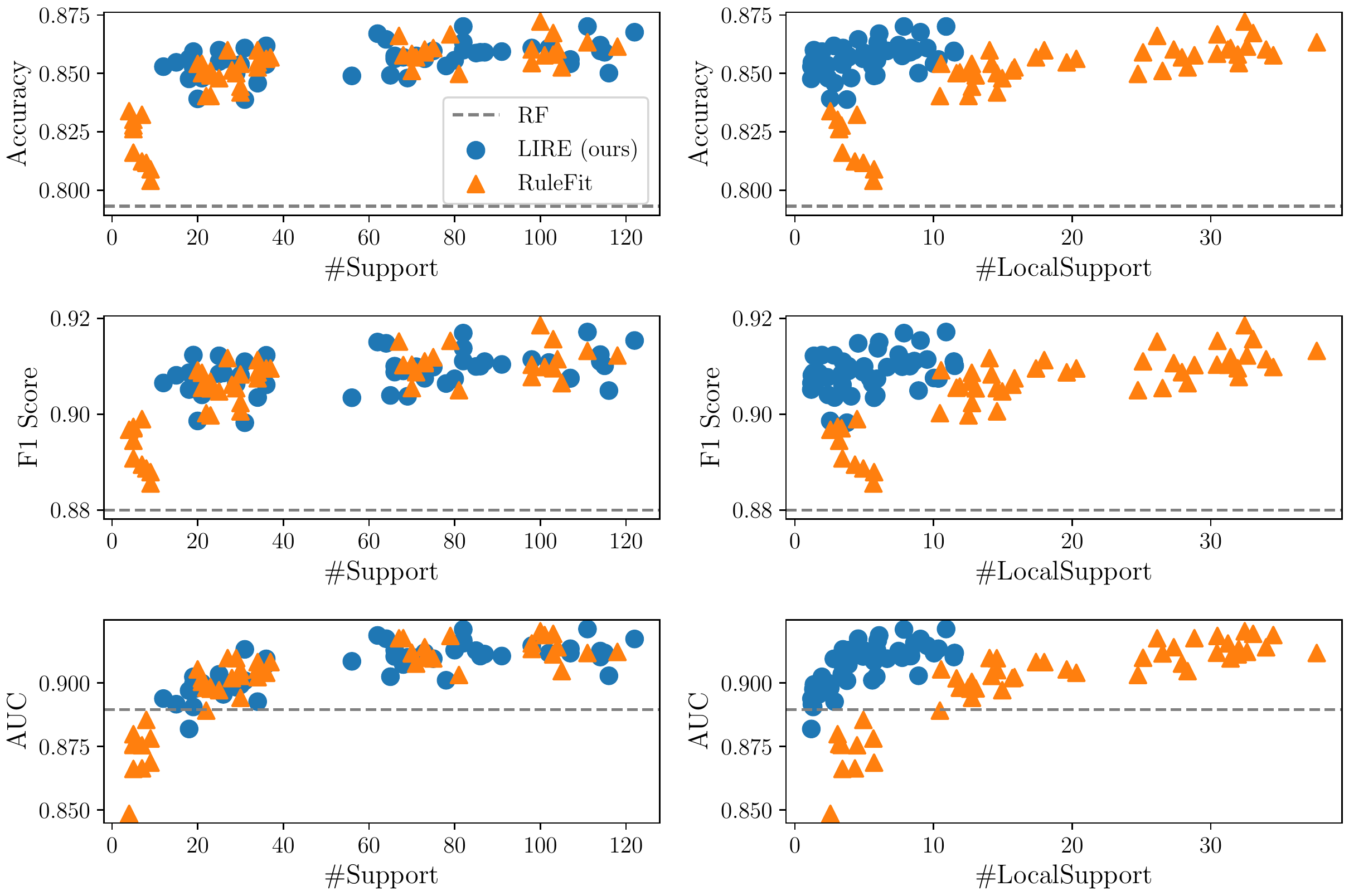}
    \caption{Experimental results on the accuracy-interpretability trade-off with $\lambda = 1.0$. }
    \label{fig:appendx:exp:tradeoff3}
\end{figure}

\begin{figure}[p]
    \centering
    \includegraphics[width=0.9\linewidth]{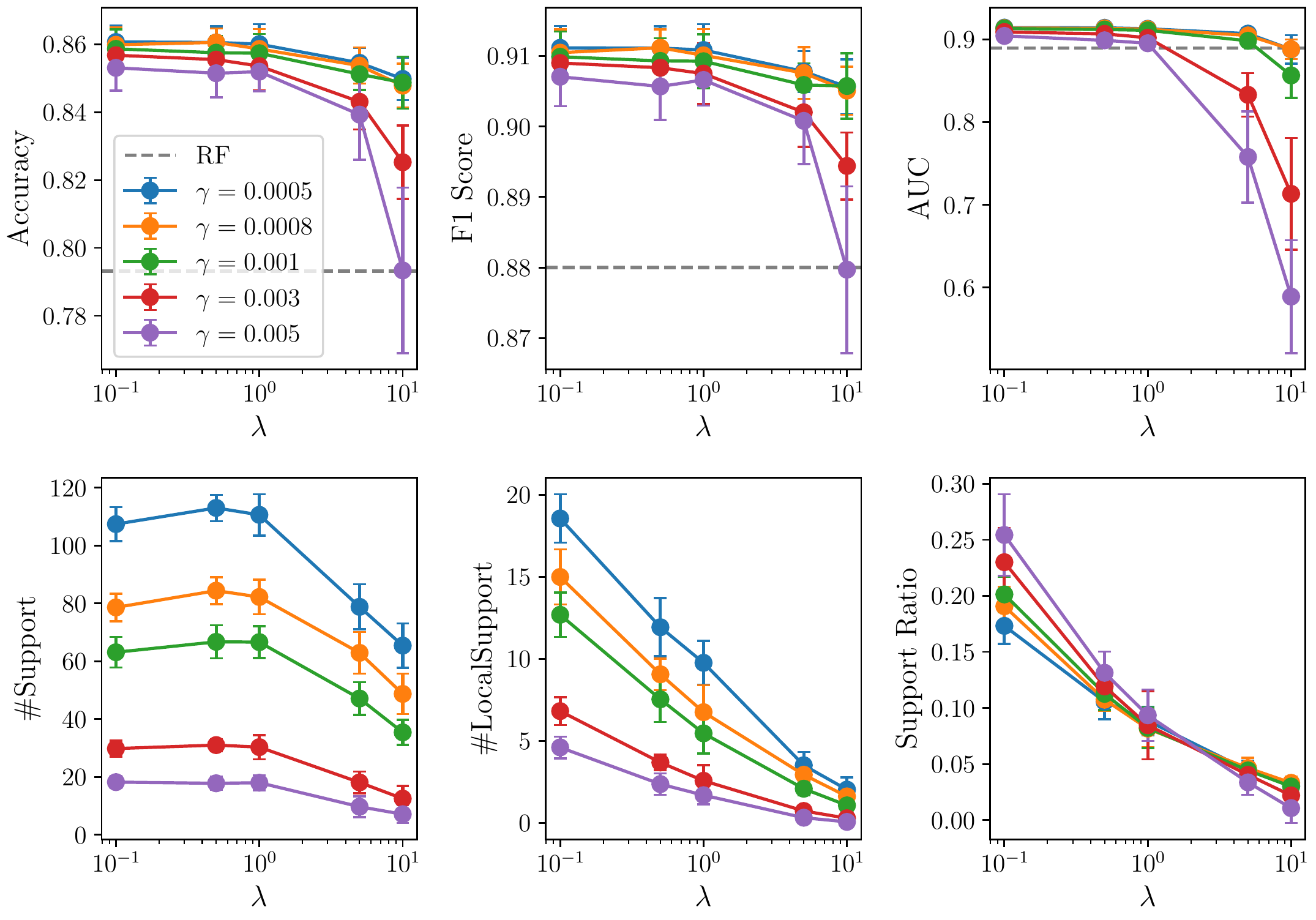}
    \caption{Experimental results on the sensitivity of the trade-off parameter $\lambda$.}
    \label{fig:appendix:exp:sensitivity}
\end{figure}

\begin{figure}[p]
    \centering
    \includegraphics[width=0.9\linewidth]{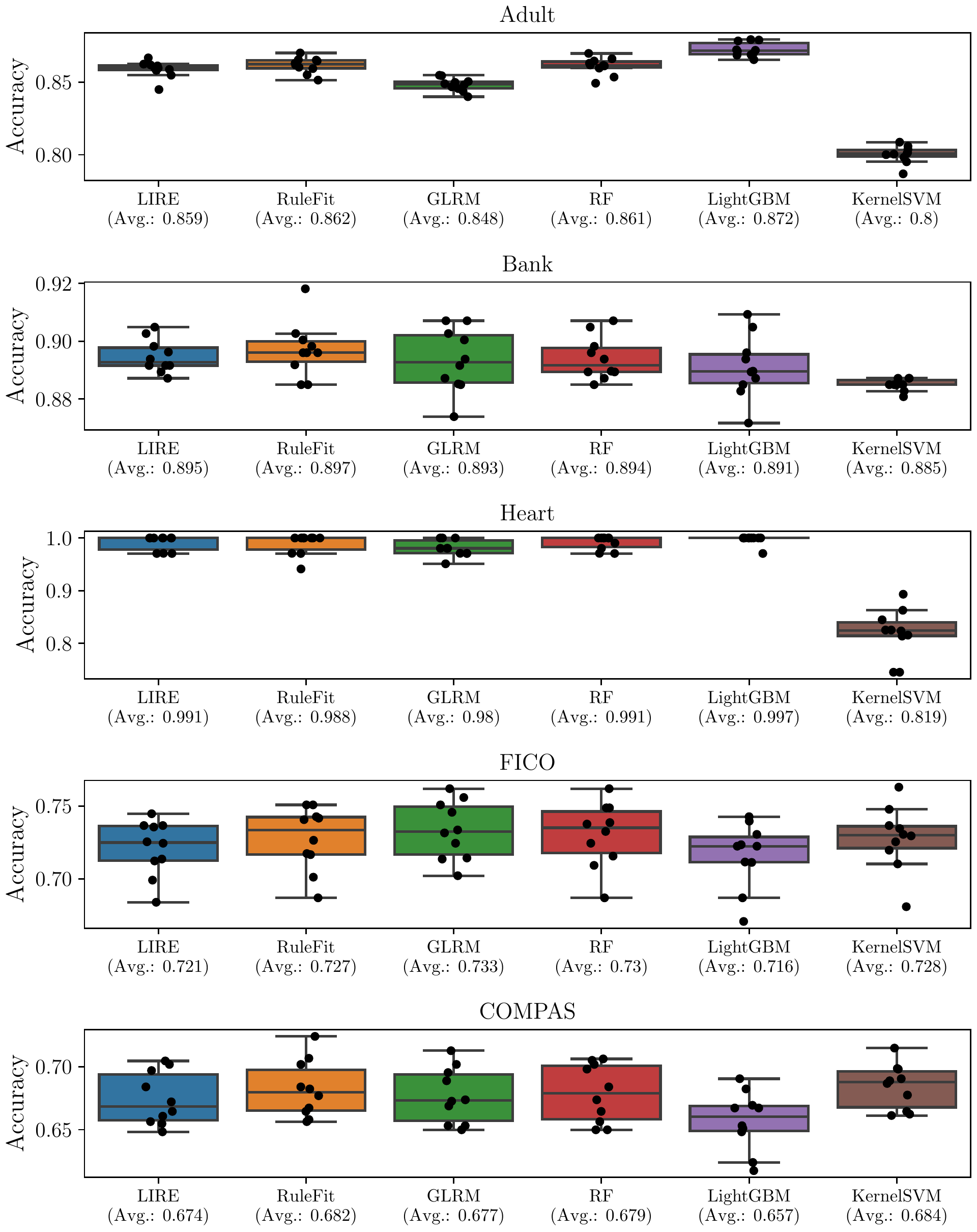}
    \caption{Experimental results on the test accuracy of 10-fold cross-validation. }
    \label{fig:appendix:exp:accuracy}
\end{figure}
\begin{figure}[p]
    \centering
    \includegraphics[width=0.9\linewidth]{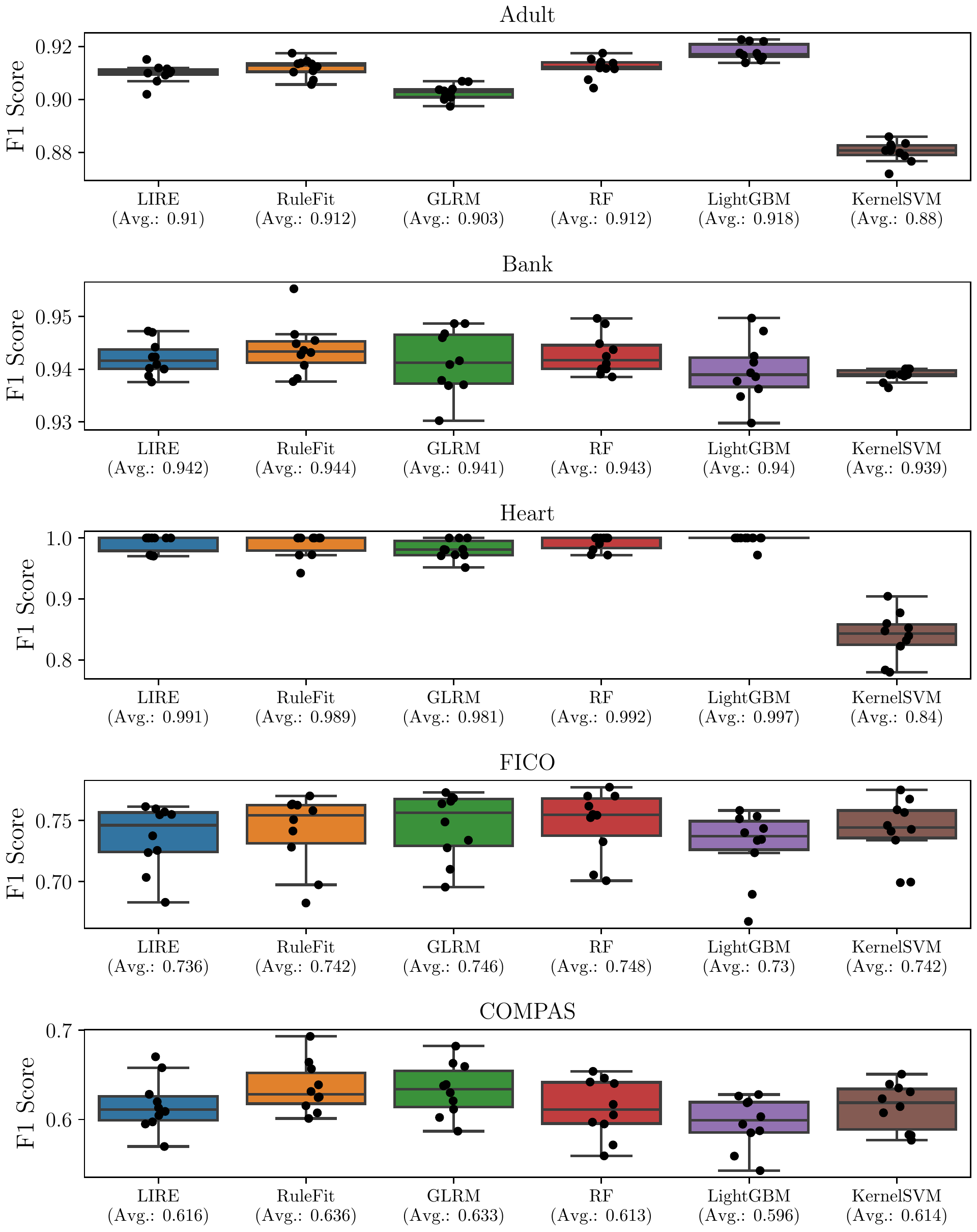}
    \caption{Experimental results on the test F1 score of 10-fold cross-validation. }
    \label{fig:appendix:exp:f1}
\end{figure}
\begin{figure}[t]
    \centering
    \includegraphics[width=0.9\linewidth]{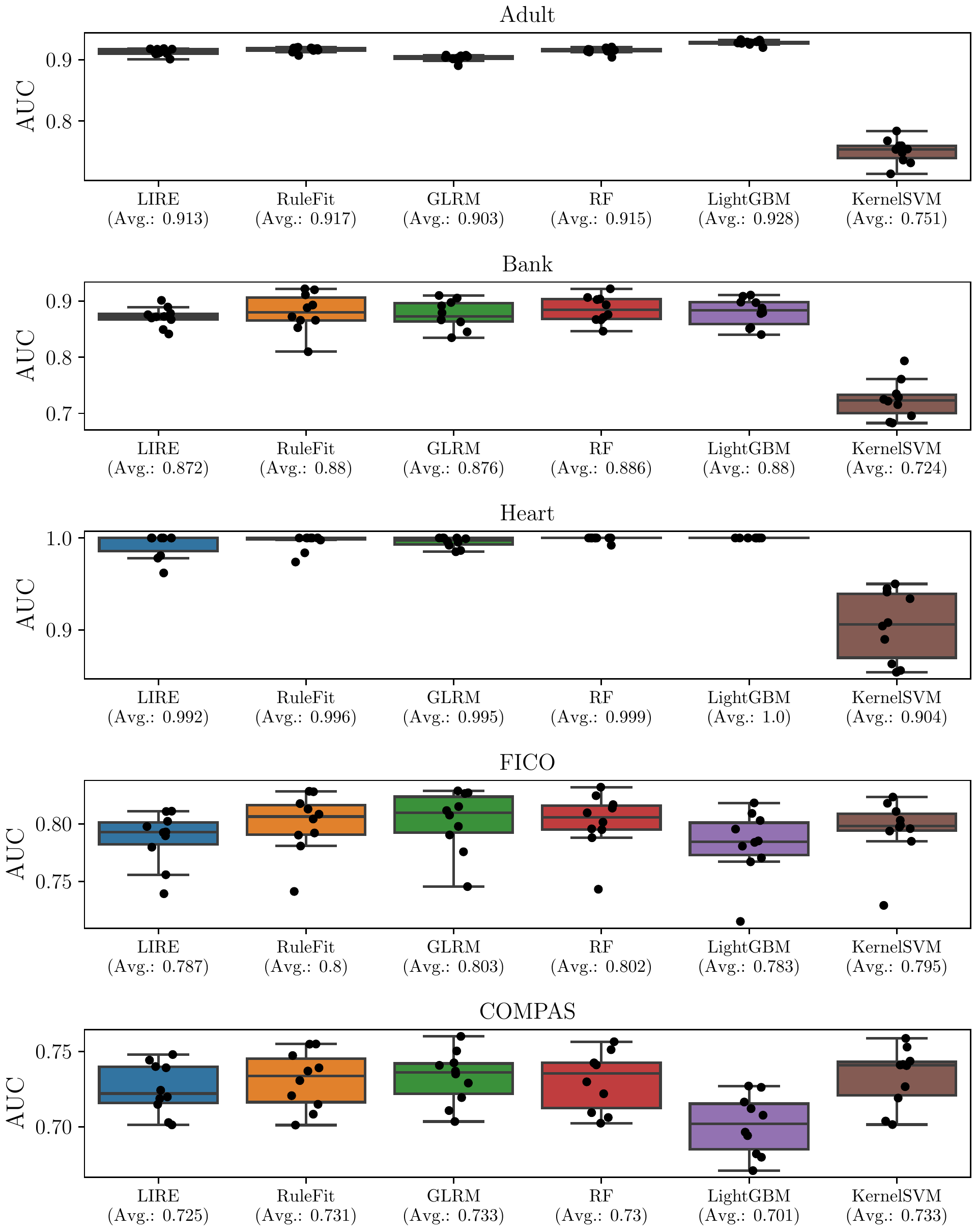}
    \caption{Experimental results on the test AUC of 10-fold cross-validation. }
    \label{fig:appendix:exp:auc}
\end{figure}

\newpage

\begin{table}[p]
    \centering
    % \small
    \caption{
        p-values of the Wilcoxon signed-rank test on the test accuracy, F1 score, and AUC of $10$-fold cross-validation. 
        While the null hypothesis is that there is no difference between the accuracy of LIRE and the baseline, the alternative is that each score of LIRE is lower than the baseline. 
        $\dag$ and $\ddag$ denote the null hypothesis is rejected with significance level $0.05$ and $0.01$, respectively. 
    }
    \begin{tabular}{
        lcccccc
        % p{0.13\textwidth}>{\centering}        
        % p{0.125\textwidth}>{\centering}        
        % p{0.13\textwidth}>{\centering}
        % p{0.13\textwidth}>{\centering}
        % p{0.13\textwidth}>{\centering}
        % p{0.13\textwidth}>{\centering\arraybackslash}
        % p{0.13\textwidth}
    }
        \toprule
        \multirow{2}{*}{\textbf{Criterion}} & \multirow{2}{*}{\textbf{Dataset}} & \multicolumn{2}{c}{\textbf{Rule Ensemble}} & \multicolumn{3}{c}{\textbf{Complex Model}} \\
        \cmidrule(lr){3-4} \cmidrule(lr){5-7}
        & & {RuleFit} & {GLRM} & {RF} & {LightGBM} & {KernelSVM} \\
        \midrule
        \multirow{5}{*}{\textbf{Accuracy}}
        & Adult & $0.02531^\dag$ & $1.0$ & $0.06174$ & $0.00098^\ddag$ & $1.0$ \\
        & Bank & $0.1875$ & $0.53906$ & $0.65234$ & $0.88214$ & $0.99626$ \\
        & Heart & $0.79289$ & $0.91734$ & $0.39323$ & $0.08986$ & $1.0$ \\
        & FICO & $0.02441^\dag$ & $0.00865^\ddag$ & $0.00195^\ddag$ & $0.95801$ & $0.03316^\dag$ \\
        & COMPAS & $0.00888^\ddag$ & $0.1377$ & $0.10267$ & $1.0$ & $0.04292^\dag$ \\
        \midrule
        \multirow{5}{*}{\textbf{F1 score}}
        & Adult & $0.00684^\ddag$ & $1.0$ & $0.01367^\dag$ & $0.00098^\ddag$ & $1.0$ \\
        & Bank & $0.1377$ & $0.46094$ & $0.24609$ & $0.91346$ & $0.99023$ \\
        & Heart & $0.5$ & $0.89796$ & $0.34292$ & $0.08986$ & $1.0$ \\
        & FICO & $0.11621$ & $0.01367^\dag$ & $0.00293^\ddag$ & $0.99023$ & $0.06543$ \\
        & COMPAS & $0.00195^\ddag$ & $0.00488^\ddag$ & $0.8125$ & $0.97559$ & $0.61523$ \\
        \midrule
        \multirow{5}{*}{\textbf{AUC}}
        & Adult & $0.01367^\dag$ & $1.0$ & $0.02441^\dag$ & $0.00098^\ddag$ & $1.0$ \\
        & Bank & $0.1377$ & $0.21582$ & $0.01367^\dag$ & $0.05273$ & $1.0$ \\
        & Heart & $0.2326$ & $0.24948$ & $0.07206$ & $0.0544$ & $1.0$ \\
        & FICO & $0.00195^\ddag$ & $0.00195^\ddag$ & $0.00098^\ddag$ & $0.8125$ & $0.02441^\dag$ \\
        & COMPAS & $0.01367^\dag$ & $0.00488^\ddag$ & $0.01855^\dag$ & $1.0$ & $0.00195^\ddag$ \\
        \bottomrule
    \end{tabular}
    \label{tab:appendix:exp:test:acc}
\end{table}

\begin{table}[p]
    \centering
    % \small
    \caption{
        p-values of the Wilcoxon signed-rank test on the global and local support sizes of $10$-fold cross-validation. 
        While the null hypothesis is that there is no difference between the support size of LIRE and the baseline, the alternative is that the support size of LIRE is larger than the baseline. 
        $\dag$ and $\ddag$ denote the null hypothesis is rejected with significance level $0.05$ and $0.01$, respectively. 
    }
    \begin{tabular}{
        lcccc
        % p{0.125\textwidth}>{\centering}        
        % p{0.13\textwidth}>{\centering}
        % p{0.13\textwidth}>{\centering}
        % p{0.13\textwidth}>{\centering\arraybackslash}
        % p{0.13\textwidth}
    }
        \toprule
        \multirow{2}{*}{\textbf{Dataset}} & \multicolumn{2}{c}{\textbf{\#Support}} & \multicolumn{2}{c}{\textbf{\#LocalSupport}} \\
        \cmidrule(lr){2-3} \cmidrule(lr){4-5}
        & {RuleFit} & {GLRM} & {RuleFit} & {GLRM} \\
        \midrule
        Adult & $1.0$ & $0.09668$ & $1.0$ & $1.0$ \\
        Bank & $1.0$ & $0.83887$ & $1.0$ & $1.0$ \\
        Heart & $1.0$ & $0.99902$ & $1.0$ & $1.0$ \\
        FICO & $0.90332$ & $0.8623$ & $1.0$ & $1.0$ \\
        COMPAS & $0.99618$ & $0.8125$ & $1.0$ & $1.0$ \\
        \bottomrule
    \end{tabular}
    \label{tab:appendix:exp:test:sup}
\end{table}

\end{document}